\definecolor{darkgreen}{rgb}{0,0.6,0}
\newtheorem{problem}{Problem}
\newtheorem{theorem}{Theorem}
\newtheorem{corollary}[theorem]{Corollary}
\newtheorem{lemma}[theorem]{Lemma}
\newtheorem{remark}{Remark}
\newtheorem{definition}{Definition}
\definecolor{note}{rgb}{0.1,0.1,1}
\definecolor{rephase}{rgb}{0.15,0.7,0.15}
\definecolor{bag}{rgb}{0.6,0.6,0.2}
\renewcommand{\H}{\mathcal{H}}
\DeclareMathOperator{\diff}{Diff}
\DeclareMathOperator{\tr}{Tr}
\DeclareMathOperator{\Lie}{Lie}
\newcommand{\se}{\mathfrak{se}}
\newcommand{\so}{\mathfrak{so}}
\newcommand{\I}{\mathcal{I}}
\newcommand{\sumxz}{\sum_{\substack{x_i\in X\\ z_j\in Z}}}
\newcommand{\Fcal}{\mathcal{F}}
\newcommand{\Gcal}{\mathcal{G}}
\newcommand{\Hcal}{\mathcal{H}}
\newcommand{\Ical}{\mathcal{I}}
\newcommand{\Xcal}{\mathcal{X}}
\newcommand{\transpose}{\mathsf{T}}
\newcommand{\SO}{\mathrm{SO}}
\newcommand{\SE}{\mathrm{SE}}
\newcommand{\GL}{\mathrm{GL}}
\DeclareDocumentCommand{\vectorToSkew}{ O{} }{\left(#1\right)_\times}
\DeclareDocumentCommand{\vector}{ O{} }{\mathrm{vec}(#1)}
\DeclareDocumentCommand{\zeros}{ O{} }{\textbf{0}_{#1}}
\DeclareDocumentCommand{\X}{ O{} O{} }{\textbf{X}_{#1}^{#2}}
\DeclareDocumentCommand{\XE}{ O{} O{} }{\hat{\textbf{X}}_{#1}^{#2}}
\DeclareDocumentCommand{\L}{ O{} O{} }{\textbf{L}_{#1}^{#2}}
\DeclareDocumentCommand{\S}{ O{} O{} }{\textbf{S}_{#1}^{#2}}
\DeclareDocumentCommand{\P}{ O{} O{} }{\textbf{P}_{#1}^{#2}}
\DeclareDocumentCommand{\B}{ O{} O{} }{\textbf{B}_{#1}^{#2}}
\DeclareDocumentCommand{\Q}{ O{} O{} }{\textbf{Q}_{#1}^{#2}}
\DeclareDocumentCommand{\H}{ O{} O{} }{\textbf{H}_{#1}^{#2}}
\DeclareDocumentCommand{\J}{ O{} O{} }{\textbf{J}_{#1}^{#2}}
\DeclareDocumentCommand{\R}{ O{} O{} }{\textbf{R}_{#1}^{#2}}
\DeclareDocumentCommand{\RE}{ O{} O{} }{\hat{\textbf{R}}_{#1}^{#2}}
\DeclareDocumentCommand{\Q}{ O{} O{} }{\textbf{Q}_{#1}^{#2}}
\DeclareDocumentCommand{\T}{ O{} O{} }{\textbf{T}_{#1}^{#2}}
\DeclareDocumentCommand{\L}{ O{} O{} }{\textbf{L}_{#1}^{#2}}
\DeclareDocumentCommand{\K}{ O{} O{} }{\textbf{K}_{#1}^{#2}}
\DeclareDocumentCommand{\V}{ O{} O{} }{\textbf{V}_{#1}^{#2}}
\DeclareDocumentCommand{\N}{ O{} O{} }{\textbf{N}_{#1}^{#2}}
\DeclareDocumentCommand{\Y}{ O{} O{} }{\textbf{Y}_{#1}^{#2}}
\DeclareDocumentCommand{\F}{ O{} O{} }{\textbf{F}_{#1}^{#2}}
\DeclareDocumentCommand{\G}{ O{} O{} }{\textbf{G}_{#1}^{#2}}
\DeclareDocumentCommand{\A}{ O{} O{} }{\textbf{A}_{#1}^{#2}}
\DeclareDocumentCommand{\TH}{ O{} O{} }{\boldsymbol{\Theta}_{#1}^{#2}}
\DeclareDocumentCommand{\x}{ O{} O{} }{\textbf{x}_{#1}^{#2}}
\DeclareDocumentCommand{\e}{ O{} O{} }{\textbf{e}_{#1}^{#2}}
\DeclareDocumentCommand{\c}{ O{} O{} }{\textbf{c}_{#1}^{#2}}
\DeclareDocumentCommand{\C}{ O{} O{} }{\textbf{C}_{#1}^{#2}}
\DeclareDocumentCommand{\CM}{ O{} O{} }{\tilde{\textbf{C}}_{#1}^{#2}}
\DeclareDocumentCommand{\RM}{ O{} O{} }{\tilde{\textbf{R}}_{#1}^{#2}}
\DeclareDocumentCommand{\I}{ O{} O{} }{\textbf{I}_{#1}^{#2}}
\DeclareDocumentCommand{\O}{ O{} O{} }{\textbf{O}_{#1}^{#2}}
\DeclareDocumentCommand{\r}{ O{} O{} }{\textbf{r}_{#1}^{#2}}
\DeclareDocumentCommand{\t}{ O{} O{} }{\textbf{t}_{#1}^{#2}}
\DeclareDocumentCommand{\u}{ O{} O{} }{\textbf{u}_{#1}^{#2}}
\DeclareDocumentCommand{\d}{ O{} O{} }{\textbf{d}_{#1}^{#2}}
\DeclareDocumentCommand{\dE}{ O{} O{} }{\hat{\textbf{d}}_{#1}^{#2}}
\DeclareDocumentCommand{\b}{ O{} O{} }{\textbf{b}_{#1}^{#2}}
\DeclareDocumentCommand{\a}{ O{} O{} }{\textbf{a}_{#1}^{#2}}
\DeclareDocumentCommand{\g}{ O{} O{} }{\textbf{g}_{#1}^{#2}}
\DeclareDocumentCommand{\dM}{ O{} O{} }{\tilde{\textbf{d}}_{#1}^{#2}}
\DeclareDocumentCommand{\params}{ O{} O{} }{\boldsymbol{\theta}_{#1}^{#2}}
\DeclareDocumentCommand{\paramsE}{ O{} O{} }{\hat{\boldsymbol{\theta}}_{#1}^{#2}}
\DeclareDocumentCommand{\paramError}{ O{} O{} }{\boldsymbol{\zeta}_{#1}^{#2}}
\DeclareDocumentCommand{\gyroscopeBias}{ O{} }{\textbf{b}_{#1}^{g}}
\DeclareDocumentCommand{\gyroscopeBiasE}{ O{} }{\hat{\textbf{b}}_{#1}^{g}}
\DeclareDocumentCommand{\accelerometerBias}{ O{} }{\textbf{b}_{#1}^{a}}
\DeclareDocumentCommand{\accelerometerBiasE}{ O{} }{\hat{\textbf{b}}_{#1}^{a}}
\DeclareDocumentCommand{\position}{ O{} O{} }{{}_\text{#2}\textbf{p}_{\text{#1}}}
\DeclareDocumentCommand{\positionDot}{ O{} O{} }{{}_\text{#2}\dot{\textbf{p}}_{\text{#1}}}
\DeclareDocumentCommand{\linearVelocity}{ O{} O{} }{{}_\text{#2}\textbf{v}_{\text{#1}}}
\DeclareDocumentCommand{\linearVelocityM}{ O{} O{} }{{}_\text{#2}\tilde{\textbf{v}}_{\text{#1}}}
\DeclareDocumentCommand{\orientation}{ O{} }{\textbf{R}_{\text{#1}}}
\DeclareDocumentCommand{\orientationDot}{ O{} }{\dot{\textbf{R}}_{\text{#1}}}
\DeclareDocumentCommand{\angularVelocity}{ O{} O{} }{{}_\text{#2}\boldsymbol{\omega}_{\text{#1}}}
\DeclareDocumentCommand{\angularVelocityM}{ O{} O{} }{{}_\text{#2}\tilde{\boldsymbol{\omega}}_{\text{#1}}}
\DeclareDocumentCommand{\acceleration}{ O{} O{} }{{}_\text{#2}\textbf{a}_{\text{#1}}}
\DeclareDocumentCommand{\accelerationM}{ O{} O{} }{{}_\text{#2}\tilde{\textbf{a}}_{\text{#1}}}
\DeclareDocumentCommand{\p}{ O{} O{} }{\textbf{p}_{#1}^{#2}}
\DeclareDocumentCommand{\pE}{ O{} O{} }{\hat{\textbf{p}}_{#1}^{#2}}
\DeclareDocumentCommand{\pM}{ O{} O{} }{\tilde{\textbf{p}}_{#1}^{#2}}
\DeclareDocumentCommand{\v}{ O{} O{} }{\textbf{v}_{#1}^{#2}}
\DeclareDocumentCommand{\vE}{ O{} O{} }{\hat{\textbf{v}}_{#1}^{#2}}
\DeclareDocumentCommand{\w}{ O{} O{} }{\boldsymbol{\omega}_{#1}^{#2}}
\DeclareDocumentCommand{\wM}{ O{} O{} }{\tilde{\boldsymbol{\omega}}_{#1}^{#2}}
\DeclareDocumentCommand{\noise}{ O{} O{} }{\textbf{w}_{#1}^{#2}}
\DeclareDocumentCommand{\FK}{ O{} }{\;\textit{\textbf{h}}_{#1}}
\DeclareDocumentCommand{\FKswitch}{ O{} }{\;\textit{\textbf{g}}_{#1}}
\DeclareDocumentCommand{\angleTheta}{ O{} O{} }{\boldsymbol{\theta}_{#1}^{#2}}
\DeclareDocumentCommand{\anglePhi}{ O{} O{} }{\boldsymbol{\phi}_{#1}^{#2}}
\DeclareDocumentCommand{\encoders}{ O{} O{} }{\boldsymbol{\alpha}_{#1}^{#2}}
\DeclareDocumentCommand{\encodersM}{ O{} O{} }{\tilde{\boldsymbol{\alpha}}_{#1}^{#2}}
\DeclareDocumentCommand{\offsets}{ O{} O{} }{\boldsymbol{\epsilon}_{#1}^{#2}}
\DeclareDocumentCommand{\Cov}{ O{} O{} }{\boldsymbol{\Sigma}_{#1}^{#2}}
\DeclareDocumentCommand{\Axis}{ O{} O{} }{\textnormal{Axis}_{#1}^{#2}}
\DeclareDocumentCommand{\using}{ O{} O{} }{\stackrel{\mathmakebox[\widthof{=}]{\text{eq.} (#1)}}{#2} \enspace}
\DeclareDocumentCommand{\Adjoint}{ O{} }{\mathrm{Ad}_{#1}}
\DeclareDocumentCommand{\vectorToAlgebra}{ O{} }{\mathscr{L}_\mathfrak{g}\left(#1\right)}
\DeclareDocumentCommand{\groupError}{ O{} O{} }{\boldsymbol{\eta}_{#1}^{#2}}
\DeclareDocumentCommand{\tangentError}{ O{} O{} }{{\xi}_{#1}^{#2}}
\newcommand{\squeezeup}{\vspace{-2mm}}
\newcommand{\mathleft}{\@fleqntrue\@mathmargin0pt}
\newcommand{\mathcenter}{\@fleqnfalse}
\begin{document}

% paper title
\title{Continuous Direct Sparse Visual Odometry from RGB-D Images}

% \author{\authorblockN{Maani Ghaffari$^{*}$, William Clark$^{*}$, Anthony Bloch, Ryan M. Eustice, and Jessy W. Grizzle}\\
% \authorblockA{University of Michigan, Ann Arbor, MI, USA\\
% \tt\small \{maanigj, wiclark, abloch, eustice, grizzle\}@umich.edu\\
% {\footnotesize $^{*}$The authors contributed equally to this work.}
% }}
\author{Maani Ghaffari$^{*}$, William Clark$^{*}$, Anthony Bloch, Ryan M. Eustice, and Jessy W. Grizzle\\
University of Michigan, Ann Arbor, MI, USA\\
\tt\small \{maanigj, wiclark, abloch, eustice, grizzle\}@umich.edu\\
{\footnotesize $^{*}$The authors contributed equally to this work.}
}

\maketitle

\begin{abstract}
This paper reports on a novel formulation and evaluation of visual odometry from RGB-D images. Assuming a static scene, the developed theoretical framework generalizes the widely used direct energy formulation (photometric error minimization) technique for obtaining a rigid body transformation that aligns two overlapping RGB-D images to a continuous formulation. The continuity is achieved through functional treatment of the problem and representing the process models over \mbox{RGB-D} images in a reproducing kernel Hilbert space; consequently, the registration is not limited to the specific image resolution and the framework is fully analytical with a closed-form derivation of the gradient. We solve the problem by maximizing the inner product between two functions defined over RGB-D images, while the continuous action of the rigid body motion Lie group is captured through the integration of the flow in the corresponding Lie algebra. Energy-based approaches have been extremely successful and the developed framework in this paper shares many of their desired properties such as the parallel structure on both CPUs and GPUs, sparsity, semi-dense tracking, avoiding explicit data association which is computationally expensive, and possible extensions to the simultaneous localization and mapping frameworks. The evaluations on experimental data and comparison with the equivalent energy-based formulation of the problem confirm the effectiveness of the proposed technique, especially, when the lack of structure and texture in the environment is evident.
\end{abstract}

\IEEEpeerreviewmaketitle

\section{Introduction}
Sensor registration is a fundamental task in robotic perception. Modern algorithms for sensor registration rely on optimization techniques to solve often nonlinear and large-scale problems~\citep{kummerle2011g,dellaert2012factor}. In particular, estimating the rigid body transformation of a moving sensor such as a camera (or in combination with other sensors) is of major importance in robotics and computer vision~\citep{davison2007monoslam,strasdat2012local,engel2014lsd,forster2017manifold,rhartley-2018c}. Cameras in both \emph{monocular} and \emph{depth} versions are rich sources of information acquisition and are well-suited for varieties of real-world applications of autonomous systems that usually involves solving the Simultaneous Localization and Mapping (SLAM) problem~\citep{dellaert2006square,kaess2012isam2,engel2018direct}. 

\begin{figure}[t]
    \centering
    \subfloat{\includegraphics[width=0.5\columnwidth]{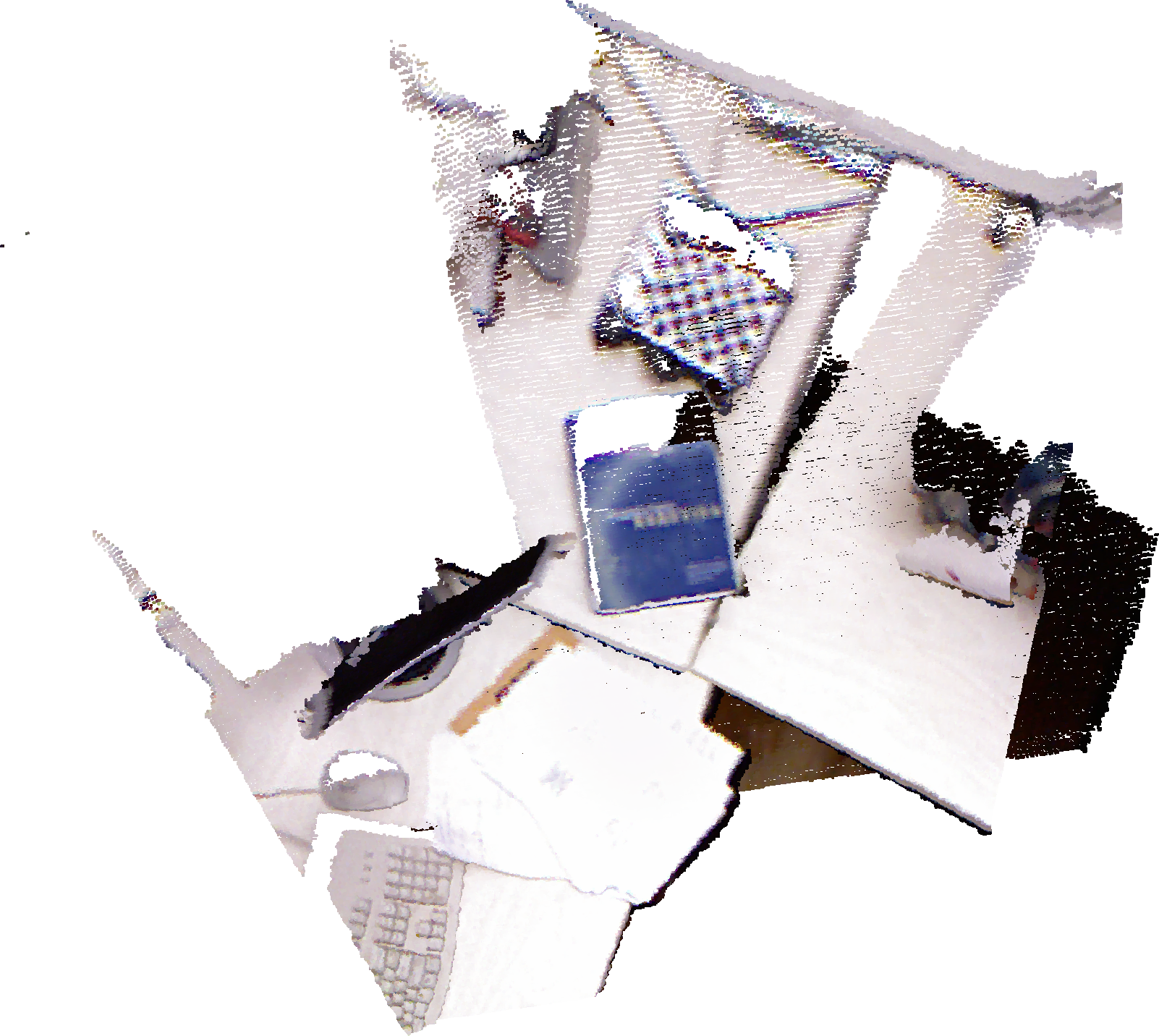}}
    \subfloat{\includegraphics[width=0.5\columnwidth]{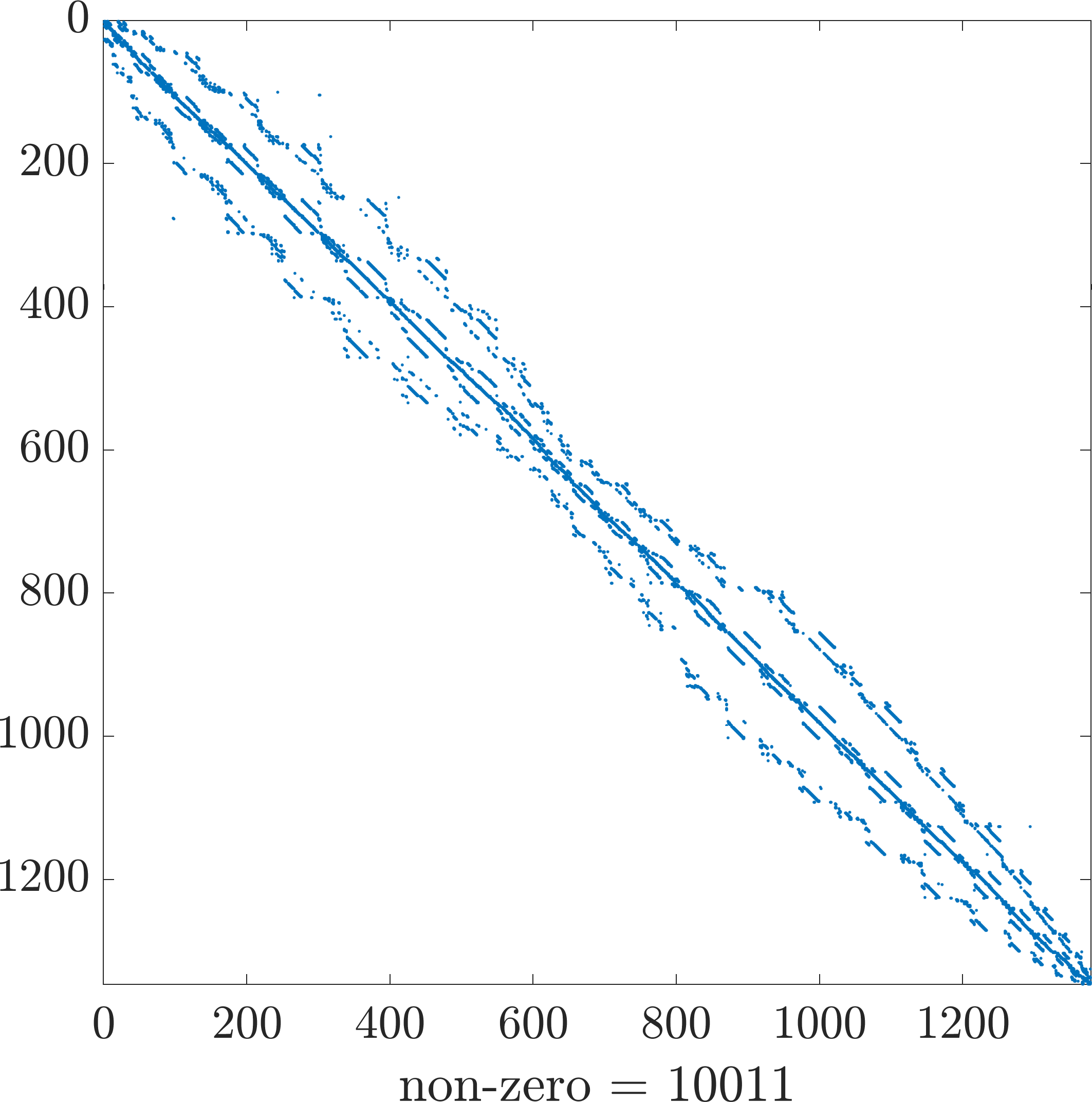}}
    \caption{The registration of frames 1 and 2 of fr1/desk sequence from \mbox{RGB-D} SLAM dataset and benchmark~\citep{sturm12iros} using the proposed continuous sensor registration framework. In the right, the figure shows the sparsity pattern of the corresponding kernel matrix of the inner product structure. The computation of this sparse matrix can be done entirely in parallel. Geometrically, the matrix shows the correlation of each measurement in frame 2 after applying the final rigid body transformation with other measurements in frame 1.}
    \label{fig:first}
\end{figure}

A key strength of the successful state-of-the-art visual perception algorithms is the ability to track the camera pose accurately with the lowest possible drift. Then, visual place-recognition algorithms such as the work of~\citet{galvez2012bags} are used to remove the drift by providing place-revisiting measurements known as \emph{loop-closures}. Direct visual odometry methods minimize the \emph{photometric error} using the photometric measurements, i.e., image intensity values, provided by the camera. As opposed to \emph{indirect} methods, the information contained in an image is not abstracted into a set of sparse \emph{keypoints} or \emph{features}, e.g., corners; hence, direct methods can achieve superior performance in tracking and \emph{dense} or \emph{semi-dense} mappings, given a well-calibrated camera~\citep{audras2011real,kerl2013robust,engel2014lsd}. In addition, direct visual odometry \emph{front-end systems} also possess a \emph{sparse Hessian structure} which is very similar to the general SLAM structure, leading to real-time performance. In contrast, indirect methods model the geometric error and are much more robust to geometric noise such as a poor intrinsic camera calibration or a rolling shutter~\citep{engel2018direct}. 

% A common case is the \emph{monocular} visual odometry or SLAM where a single camera is used to estimate the camera trajectory and reconstruct the map~\citep{davison2007monoslam,engel2014lsd}

In this paper, inspired by the fact that (colored) point cloud measurements constitute discrete sensing information from the continuous motion of the camera and the idea of avoiding costly data association between measurements, we formulate the registration problem in a continuous form that directly operates on a mapping from 3D space to an abstract information space such as intensity surface. The explicit representation between color information and 2D/3D geometry (image or Euclidean space coordinates) is not directly available; hence, the current direct methods use numerical differentiation for computing the gradient and are limited to fixed image size and resolution, given the camera model and measurements for re-projection of the 3D points. As illustrated in Fig.~\ref{fig:first}, we show that the proposed continuous representation can be sparse or sparsified which means, geometrically speaking, the local correlation is sufficiently rich to capture the information available in data. This work has the following contributions:
\begin{enumerate}
    \item We develop a fundamentally novel formulation of the sensor registration problem that is continuous and models the action of an arbitrary Lie group on any smooth manifold. Our solution uses the integration of the flow in the Lie algebra by maximizing the inner product between two functions defined over the fixed and moving measurements sets. The continuity is achieved through functional treatment of the problem and representing the functions in a reproducing kernel Hilbert space.
    \item We apply the developed framework to the particular, and commonly used, case of RGB-D images (depth camera) and $\SE(n)$ matrix Lie group. As a result, the registration is not limited to the specific image resolution.
    \item As opposed to the current direct energy formulation, which involves computation of the numerical image intensity gradient to be used in conjunction with the analytical Jacobian of the pose via the chain rule, our framework is fully analytical and the gradient has a complete closed-form derivation.
    \item We evaluate the proposed algorithm using publicly available RGB-D benchmark in~\citet{sturm12iros} and provide the open-source implementation available at:\\ \href{https://github.com/MaaniGhaffari/cvo-rgbd}{\url{https://github.com/MaaniGhaffari/cvo-rgbd}}
\end{enumerate}

The remainder of this paper is organized as follows. In \textsection\ref{sec:prelim}, the required mathematical preliminaries and notation are briefly explained. The main theoretical result for the general problem formulation on any smooth manifold and any Lie group acting on it is given in \textsection\ref{sec:problem}. The specialized form of the problem formulation for the special Euclidean group is derived in \textsection\ref{sec:euclidean}. A brief theoretical analysis for the verification of the idea is provided in \textsection\ref{sec:verification}. The integration of the flow for $\SE(n)$ and for the special case of 3D space, $\SE(3)$, to obtain the solution is explained in \textsection\ref{sec:senintegration}. Experimental evaluations of the proposed method for registration and tracking using RGB-D images are presented in \textsection\ref{sec:results}. Finally, \textsection\ref{sec:conclusion} concludes the paper and provides suggestions as future work.

%%%%%%%%%%%%%%%%%%%%%%%%%%%%%%%%%%%%%%%%%%%%%%%%%%%%%%%%%

\section{Mathematical Preliminaries and Notation}
\label{sec:prelim}

We first review some mathematical preliminaries to establish the notation and give the paper better flow before stating the main results.

% We assume a matrix Lie group~\citep{hall2015lie,chirikjian2011stochastic} denoted $\mathcal{G}$ and its associated Lie Algebra denoted $\mathfrak{g}$. If elements of $\mathcal{G}$ are $n \times n$ matrices, then so are elements of $\mathfrak{g}$. When doing calculations, it is very convenient to let
% $$\mathcal{L}_\mathfrak{g}:\realnumbers^{\mathrm{dim} \mathfrak{g}} \to \mathfrak{g}$$
% be the linear map that takes elements of the tangent space of $\mathcal{G}$ at the identity to the corresponding matrix representation so that the \textit{exponential map of the Lie group}, $\textnormal{exp}:\realnumbers^{\mathrm{dim} \mathfrak{g}} \to \mathcal{G}$,
% is computed by
%  $$\textnormal{exp}(\tangentError) = \textnormal{exp}_m(\vectorToAlgebra[\tangentError]),$$ where $\textnormal{exp}_m(\cdot)$ is the usual exponential of $n \times n$ matrices.
 
%  The adjoint representation plays a key role in the theory of Lie groups and through this linear map we can capture the non-commutative structure of a Lie group.
%   \begin{definition}[The Adjoint Map, see page 63~\citet{hall2015lie}]
% Let $\mathcal{G}$ be a matrix Lie group with Lie algebra $\mathfrak{g}$. For any $\textbf{X} \in \mathcal{G}$ the adjoint map, $\mathrm{Ad}_\textbf{X}:\mathfrak{g} \to \mathfrak{g}$, is a linear map defined as \mbox{$\mathrm{Ad}_\textbf{X}(\vectorToAlgebra[\tangentError]) = \textbf{X} \vectorToAlgebra[\tangentError] \textbf{X}^{-1}$}. Furthermore, we denote the matrix representation of the adjoint map by $\Adjoint[\X]$.
% \end{definition}

\subsection{Matrix Lie Group of Motion in $\mathbb{R}^n$}
 \label{subsec:liesen}
 
The general linear group of degree $n$, denoted \mbox{\small{$\GL_n(\mathbb{R})$}}, is the set of all $n\times n$ real invertible matrices, where the group binary operation is the ordinary matrix multiplication. The $n$-dimensional special orthogonal group, denoted  
\begin{equation}
 \begin{split}
  \nonumber \SO(n) = \{{R}\in \GL_n(\mathbb{R}) |~ {R} {R}^\transpose = {I}_n, \operatorname{det} {R} = +1\},
 \end{split}
\end{equation}
is the rotation group on $\mathbb{R}^n$. The $n$-dimensional special Euclidean group, denoted 
\begin{equation}
 \begin{split}
 \resizebox{\columnwidth}{!}{$
  \nonumber \SE(n) = \{ {h} = \left[\begin{array}{cc} {R} & {T} \\ 0 & 1 \end{array} \right] \in \GL_{n+1}(\mathbb{R}) |~ {R} \in \SO(n), {p} \in \mathbb{R}^n \},$}
 \end{split}
\end{equation}
is the group of rigid transformations, i.e., \emph{direct isometries}, on $\mathbb{R}^n$. A transformation such as \mbox{${h} \in \SE(3)$} is the parameter space  in many sensor registration problems which consists of a rotation and a translation components. Let \mbox{$\bar{{h}} \in \SE(3)$} be an estimate of ${h}$. We compute the rotational and translational distances using $\lVert{\log(\bar{{R}} {R}^{\transpose})}\rVert_{\mathrm{F}}$ and $\lVert{\bar{{T}} - \bar{{R}} {R}^{\transpose} {T}} \rVert$, respectively, where $\log(\cdot)$ is the Lie logarithm map which is, here, the matrix logarithm. These definitions are consistent with the transformation distance that can be directly computed using $\lVert{\log (\bar{{h}} {h}^{-1})}\rVert_{\mathrm{F}}$. Here, $\lVert A \rVert^2_{\mathrm{F}} = \tr(A^{\transpose}A)$ is the Frobenius norm.

This paper studies not only matrix Lie groups, but also their actions on manifolds. We have the following definition:
\begin{definition}
Let $\Gcal$ be a group and $X$ a set. A (left) group action of $\Gcal$ on $X$, denoted as $\Gcal\curvearrowright X$, is a group homomorphism $\Gcal\to \emph{Aut}(X)$ (automorphism of $X$). If $X$ is a smooth manifold, the action is smooth if $\Gcal\to \diff(X)$ (diffeomorphism of $X$).
\end{definition}
\begin{remark}
A group action can similarly be viewed as a function $\varphi:\Gcal\times X\to X$ satisfying two conditions (where $\varphi(g,x)$ will be denoted by $g.x$)
\begin{enumerate}
    \item Identity: If $e\in \Gcal$ is the identity element, then $e.x=x$ for all $x\in X$.
    \item Compatibility: $(gh).x = g.(h.x)$ for all $g,h\in \Gcal$ and $x\in X$.
\end{enumerate}
\end{remark}
\begin{remark}
This paper will be primarily focused on the standard action of $\SE(n)$ on $\mathbb{R}^n$ given by $(R,T).x = Rx+T$ for $R\in \SO(n)$ and $T\in \mathbb{R}^n$.
\end{remark}
%%%%%%%%%%%%%%%%%%%%%%%%%%%%%%%%%%%%%%%%%%%%%%%%%
\subsection{Hilbert Space}
Let $V$ be a vector space over the field of real numbers $\mathbb{R}$. An inner product on $V$ is a function \mbox{$\langle\cdot,\cdot\rangle: V\times V \to \mathbb{R}$} that is bilinear, symmetric, and positive definite. The pair $(V, \langle\cdot,\cdot\rangle)$ is called an inner product space. The inner product induces a norm that measures the magnitude or length of a vector:
% \begin{equation}
% \label{eq:norm}
	$\lVert {v} \rVert = \sqrt{\langle{v}, {v}\rangle}.$
% \end{equation}
The norm in turn induces a metric that allows for calculating the distance between two vectors:
% \begin{equation}
	$d({v},{w}) = \lVert {v} - {w} \rVert = \sqrt{\langle{v} - {w}, {v} - {w}\rangle}.$
% \end{equation}
Such a metric is homogeneous; for $a\in\mathbb{R}$, $d(a {v}, a {w}) = |a| d({v}, {w})$, and translational invariant; $d({v} + {x}, {w} + {x}) = d({v}, {w})$. These properties are inherited from the induced norm. The distance metric is positive definite, symmetric, and satisfies the triangle inequality. In addition to measuring distances, it is important to be able to understand limits. This leads to the definition of a Cauchy sequence and completeness.

\begin{definition}[Cauchy Sequence]
A Cauchy sequence is a sequence $\{x_i\}_{i=1}^{\infty}$ such that for any real number $\epsilon > 0$ there exists a natural number $\bar{n} \in \mathbb{N}$ such that for some distance metric $d(x_n, x_m) < \epsilon$ for all $n,m > \bar{n}$.
\end{definition}

\begin{definition}[Completeness]
A metric space $(M,d)$ is complete if every Cauchy sequence in $M$ converges in $M$, i.e., to a limit that is in $M$.
\end{definition}

% Intuitively, completeness can be understood as there are no gaps nor holes in the space. 
Such a metric space contains all its limit points. Note that completeness is with respect to the metric $d$ and not the topology of the space. Now, we can give a definition for a Hilbert space.

\begin{definition}[Hilbert Space]
A Hilbert space, $\Hcal$, is a complete inner product space; that is any Cauchy sequence, using the metric induced by the inner product, converges to an element in $\Hcal$.
\end{definition}

The reviewed definitions and properties are also valid for a vector space of functions. Let $(\Hcal, \langle\cdot,\cdot\rangle_{\Hcal})$ be a real Hilbert space of functions with the inner product between any two square-integrable functions $f, g \in \Hcal$ (or \mbox{$f, g \in L^2(\mathbb{R}, \mu)$}) defined as:
\begin{equation}
\label{eq:ipl2}
	\langle f, g\rangle_{\Hcal} := \int f({x}) g({x}) \mathrm{d}\mu({x}),
\end{equation}
where $\mu$ is the Lebesgue measure on $\mathbb{R}$. Similarly, the induced norm by the inner product is \mbox{$\lVert f \rVert_{\Hcal} = \sqrt{\langle f, f\rangle_{\Hcal}}$}. The Hilbert space of functions can be thought of as an infinite-dimensional counterpart of the finite-dimensional vector spaces discussed earlier.

\subsection{Representation and Reproducing Kernel Hilbert Space}
\label{sec:problem}

We now move to a more special type called Reproducing Kernel Hilbert Space (RKHS) which we will use in this work.

\begin{definition}[Kernel]
Let ${x}, {x}' \in \mathcal{X}$ be a pair of inputs for a function $k:\mathcal{X} \times \mathcal{X} \to \mathbb{R}$ known as the kernel. A kernel is symmetric if $k({x}, {x}')=k({x}',{x})$, and is positive definite if for any nonzero $f \in \Hcal~ (\text{or}~L^2(\Xcal,\mu))$: 
% \jwg{Need f nonzero. Also, you seem to jump from H to L2. Why?} \mgj{we need the Lebesgue measure for integration. Before (1) there is a hint that L2 is ok too. It can be probably better but I'm not sure how I can change it to be brief too.}
\begin{equation}
	\nonumber \int k({x}, {x}') f({x}) f({x}') \mathrm{d}\mu({x}) \mathrm{d}\mu({x}') > 0.
\end{equation}
\end{definition}

\begin{definition}[Reproducing Kernel Hilbert Space~\cite{berlinet2004reproducing}]
Let $\Hcal$ be a real-valued Hilbert space on a non-empty set $\Xcal$. A function $k:\Xcal \times \Xcal \to \mathbb{R}$ is a reproducing kernel of the Hilbert space $\Hcal$ iff:
\begin{enumerate}
	\item $\forall {x} \in \Xcal, \quad k(\cdot, {x}) \in \Hcal$,
	\item $\forall {x} \in \Xcal, \quad \forall f \in \Hcal \quad \langle f, k(\cdot, {x}) \rangle = f({x})$.
\end{enumerate}
The Hilbert space $\Hcal$ (RKHS) which possesses a reproducing kernel $k$ is called  a Reproducing Kernel Hilbert Space or a proper Hilbert space.
\end{definition}

The second property is called \emph{the reproducing property}; that is using the inner product of $f$ with $k(\cdot,{x})$, the value of function $f$ is reproduced at point ${x}$. Also, using both conditions we have: $\forall {x}, {z} \in \Xcal$, $k({x},{z}) = \langle k(\cdot, {x}), k(\cdot, {z}) \rangle$.
% \begin{equation}
% 	\nonumber \forall \textbf{x}, \textbf{z} \in \Xcal, \quad k(\textbf{x},\textbf{z}) = \langle k(\cdot, \textbf{x}), k(\cdot, \textbf{z}) \rangle.
% \end{equation}

\begin{lemma}
Any reproducing kernel is a positive definite function~\cite{berlinet2004reproducing}.
\end{lemma}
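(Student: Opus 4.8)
The plan is to reduce the positivity condition in the definition of a kernel to the nonnegativity of a squared norm in $\Hcal$, exploiting the identity $k({x},{z}) = \langle k(\cdot,{x}), k(\cdot,{z})\rangle$ that was just derived from the two defining properties of a reproducing kernel. First I would dispatch symmetry: combining this identity with the symmetry of the inner product gives $k({x},{x}') = \langle k(\cdot,{x}), k(\cdot,{x}')\rangle = \langle k(\cdot,{x}'), k(\cdot,{x})\rangle = k({x}',{x})$. The real content is the integral inequality.

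Fix a nonzero $f \in \Hcal$ and set $g := \int k(\cdot, {x}) f({x})\,\mathrm{d}\mu({x})$, an $\Hcal$-valued integral. Substituting $k({x},{x}') = \langle k(\cdot,{x}), k(\cdot,{x}')\rangle$ into the double integral of the definition and moving the integrations inside the bilinear, continuous inner product (a bounded linear functional commutes with the integral) gives
\begin{equation}\nonumber
\int k({x},{x}') f({x}) f({x}')\,\mathrm{d}\mu({x})\,\mathrm{d}\mu({x}') = \langle g, g\rangle = \lVert g \rVert_{\Hcal}^2 \ge 0,
\end{equation}
so $k$ is at least positive semidefinite. To obtain the strict inequality for nonzero $f$, I would invoke the reproducing property once more: for every $h \in \Hcal$, $\langle h, g\rangle = \int \langle h, k(\cdot,{x})\rangle f({x})\,\mathrm{d}\mu({x}) = \int h({x}) f({x})\,\mathrm{d}\mu({x})$. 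Hence if $\lVert g \rVert_{\Hcal} = 0$, taking $h = f$ yields $\int f({x})^2\,\mathrm{d}\mu({x}) = 0$, i.e., $f = 0$ $\mu$-almost everywhere, contradicting $f \ne 0$. Therefore the double integral is strictly positive, which is precisely the definition of a positive definite kernel.

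I expect the only genuine obstacle to be the analytic bookkeeping in the displayed step: one must check that $g$ is a well-defined element of $\Hcal$ (a Bochner integral, requiring mild measurability/integrability of ${x} \mapsto k(\cdot,{x}) f({x})$) and that the scalar integral commutes with $\langle\cdot,\cdot\rangle$, which rests on continuity of the inner product. The algebraic heart is immediate — the reproducing property collapses the quadratic form to a squared norm. As a sanity check and an alternative route, replacing the integrals by finite sums over any points ${x}_1,\dots,{x}_m$ and scalars $c_1,\dots,c_m$ gives $\sum_{i,j} c_i c_j\, k({x}_i,{x}_j) = \lVert \sum_i c_i\, k(\cdot,{x}_i)\rVert_{\Hcal}^2 \ge 0$ with no convergence issues, recovering the classical characterization of positive (semi)definiteness.
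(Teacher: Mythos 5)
The paper itself gives no argument for this lemma (it is quoted from the cited reference), so the only question is whether your proof stands on its own. The semidefinite part does: collapsing the quadratic form to $\lVert g\rVert_{\Hcal}^2$ via $k({x},{z})=\langle k(\cdot,{x}),k(\cdot,{z})\rangle$ and continuity of the inner product is the classical argument, and your finite-sum remark at the end is exactly the standard (and correct) statement that a reproducing kernel is of positive type, $\sum_{i,j}c_ic_j\,k({x}_i,{x}_j)=\lVert\sum_i c_i k(\cdot,{x}_i)\rVert_{\Hcal}^2\ge 0$.

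The strictness step, however, has a genuine gap. From $\lVert g\rVert_{\Hcal}=0$ you correctly deduce $\int f({x})^2\,\mathrm{d}\mu({x})=0$, i.e.\ $f=0$ $\mu$-almost everywhere; but this does not contradict $f\neq 0$ as an element of $\Hcal$, because RKHS elements are genuine pointwise-defined functions and a nonzero one may vanish off a $\mu$-null set. Concretely, take $k({x},{z})=\mathbf{1}_{\{{x}={z}\}}$ on $\mathbb{R}$ (a perfectly good reproducing kernel, whose RKHS consists of functions with countable support): every $f\in\Hcal$, zero or not, is Lebesgue-a.e.\ zero, $g=0$, and the double integral vanishes, so the strict inequality $>0$ fails. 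The same conflation of ``nonzero in $\Hcal$'' with ``nonzero in $L^2(\mu)$'' is what makes your contradiction appear to work; and if instead you read ``nonzero $f\in L^2(\mu)$'' you may no longer take $h=f$ in the reproducing property, since $f$ need not lie in $\Hcal$. In fact strict positive definiteness in the sense of the paper's definition is simply not a property of all reproducing kernels (finite-dimensional RKHSs already violate the finite-sum strict version); what is true in general, and what the cited reference asserts, is nonnegativity, which your argument does establish. To repair the write-up, either prove only $\ge 0$ (your displayed identity plus the finite-sum version suffices, modulo the Bochner-integrability caveat you already flag), or add hypotheses (e.g.\ $\mu$ of full support and $\Hcal\subset C(\Xcal)$, or an integrally strictly positive definite kernel) under which $f=0$ $\mu$-a.e.\ forces $f=0$ in $\Hcal$.
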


Finding a reproducing kernel of an RKHS might seem difficult, but fortunately, there is a one-to-one relation between a reproducing kernel and its associated RKHS, and such a reproducing kernel is unique. Therefore, our problem reduces to finding an appropriate kernel.

\begin{theorem}[Moore-Aronszajn Theorem~\cite{berlinet2004reproducing}]
\label{th:ma}
Let $k$ be a positive definite function on $\Xcal \times \Xcal$. There exists only one Hilbert space $\Hcal$ of functions on $\Xcal$ with $k$ as reproducing kernel. The subspace $\Hcal_0$ of $\Hcal$ spanned by the function $k(\cdot, {x}), {x} \in \Xcal$ is dense~\footnote{A dense subset of $M$ implies the closure of the subset $X$ equals $M$.} in $\Hcal$ and $\Hcal$ is the set of functions on $\Xcal$ which are point-wise limits of Cauchy sequence in $\Hcal_0$ with the inner product 
\begin{equation}
\label{eq:iprkhs}
	\langle f, g \rangle_{\Hcal_0} = \sum_{i=1}^n \sum_{j=1}^m \alpha_i \beta_j k({z}_j, {x}_i),
\end{equation}
where $f =  \sum_{i=1}^n \alpha_i k(\cdot, {x}_i)$ and $g =  \sum_{j=1}^m \beta_j k(\cdot, {z}_j)$.
\end{theorem}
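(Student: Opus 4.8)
The plan is to construct $\mathcal{H}$ explicitly as a completion of the pre‑Hilbert space $\mathcal{H}_0 := \mathrm{span}\{k(\cdot,x) : x\in\mathcal{X}\}$ and then verify each claimed property in turn. First I would equip $\mathcal{H}_0$ with the bilinear form defined by \eqref{eq:iprkhs}, namely $\langle \sum_i\alpha_i k(\cdot,x_i),\ \sum_j\beta_j k(\cdot,z_j)\rangle_{\mathcal{H}_0} := \sum_{i,j}\alpha_i\beta_j k(z_j,x_i)$, and check it is well defined independently of the chosen finite representations: regrouping the double sum as $\sum_j\beta_j\big(\sum_i\alpha_i k(z_j,x_i)\big)=\sum_j\beta_j f(z_j)$ exhibits it as a quantity depending only on $f$ as a function and on the data $\{(\beta_j,z_j)\}$, and by the symmetry of $k$ it also equals $\sum_i\alpha_i g(x_i)$, depending only on $g$ and $\{(\alpha_i,x_i)\}$; combining the two shows it depends only on $f$ and $g$ as functions. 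Bilinearity and symmetry are then immediate, and $\langle f,f\rangle_{\mathcal{H}_0}=\sum_{i,j}\alpha_i\alpha_j k(x_i,x_j)\ge 0$ is exactly the positive‑definiteness hypothesis on $k$ applied to finitely supported coefficients. To upgrade this to a genuine inner product, I would use the Cauchy–Schwarz inequality (valid for any positive semidefinite form) together with the one‑term special case $\langle f,k(\cdot,x)\rangle_{\mathcal{H}_0}=f(x)$ to obtain $|f(x)|^2\le k(x,x)\,\langle f,f\rangle_{\mathcal{H}_0}$, so that $\langle f,f\rangle_{\mathcal{H}_0}=0$ forces $f\equiv 0$.

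Next I would define $\mathcal{H}$ as the set of pointwise limits on $\mathcal{X}$ of $\langle\cdot,\cdot\rangle_{\mathcal{H}_0}$‑Cauchy sequences drawn from $\mathcal{H}_0$. The bound $|f_n(x)-f_m(x)|=|\langle f_n-f_m,k(\cdot,x)\rangle_{\mathcal{H}_0}|\le\sqrt{k(x,x)}\,\lVert f_n-f_m\rVert_{\mathcal{H}_0}$ shows every such Cauchy sequence converges pointwise, so the candidate limit $f(x):=\lim_n f_n(x)$ is a well‑defined function, and I would set $\langle f,g\rangle_{\mathcal{H}}:=\lim_n\langle f_n,g_n\rangle_{\mathcal{H}_0}$ for representing sequences. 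The hard part — and the step I expect to be the main obstacle — is the consistency of this construction: one must show that if $\{f_n\}\subset\mathcal{H}_0$ is Cauchy and $f_n\to 0$ pointwise, then $\lVert f_n\rVert_{\mathcal{H}_0}\to 0$, so that the pointwise limit determines the equivalence class of the Cauchy sequence and the inner product is unambiguous. I would prove this by fixing $\varepsilon>0$, choosing $N$ with $\lVert f_n-f_m\rVert_{\mathcal{H}_0}<\varepsilon$ for $n,m\ge N$, fixing $m\ge N$, writing $f_m=\sum_i\alpha_i k(\cdot,x_i)$, and observing that $\langle f_n-f_m,f_m\rangle_{\mathcal{H}_0}=\sum_i\alpha_i\big(f_n(x_i)-f_m(x_i)\big)\to-\lVert f_m\rVert_{\mathcal{H}_0}^2$ as $n\to\infty$, while Cauchy–Schwarz gives $|\langle f_n-f_m,f_m\rangle_{\mathcal{H}_0}|<\varepsilon\lVert f_m\rVert_{\mathcal{H}_0}$; letting $n\to\infty$ yields $\lVert f_m\rVert_{\mathcal{H}_0}\le\varepsilon$ for all $m\ge N$. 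This makes the map ``Cauchy sequence $\mapsto$ pointwise limit'' a well‑defined linear injection from the abstract completion of $\mathcal{H}_0$ onto $\mathcal{H}$, so that $\mathcal{H}$ is an honest Hilbert space of functions, it is complete, and $\mathcal{H}_0$ is dense in it by construction.

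It then remains to verify the reproducing property and uniqueness. For $f\in\mathcal{H}$ with representing sequence $\{f_n\}$, continuity of the inner product gives $\langle f,k(\cdot,x)\rangle_{\mathcal{H}}=\lim_n\langle f_n,k(\cdot,x)\rangle_{\mathcal{H}_0}=\lim_n f_n(x)=f(x)$, and $k(\cdot,x)\in\mathcal{H}_0\subseteq\mathcal{H}$, so both RKHS axioms hold. For uniqueness, let $\mathcal{H}'$ be any Hilbert space of functions on $\mathcal{X}$ with reproducing kernel $k$; then $k(\cdot,x)\in\mathcal{H}'$ for every $x$, hence $\mathcal{H}_0\subseteq\mathcal{H}'$, and the reproducing property forces $\langle k(\cdot,x),k(\cdot,z)\rangle_{\mathcal{H}'}=k(x,z)$, so $\langle\cdot,\cdot\rangle_{\mathcal{H}'}$ restricted to $\mathcal{H}_0$ coincides with \eqref{eq:iprkhs}; moreover any $g\in\mathcal{H}'$ orthogonal to all $k(\cdot,x)$ satisfies $g(x)=\langle g,k(\cdot,x)\rangle_{\mathcal{H}'}=0$, so $\mathcal{H}_0$ is dense in $\mathcal{H}'$. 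Since a complete space containing $\mathcal{H}_0$ densely and inducing the inner product \eqref{eq:iprkhs} is unique up to isometry, and that isometry must preserve pointwise evaluation (again by the reproducing property applied in both spaces), we conclude $\mathcal{H}'=\mathcal{H}$ with identical inner products. The displayed formula \eqref{eq:iprkhs} is simply the defining formula for $\langle\cdot,\cdot\rangle_{\mathcal{H}_0}$ recorded at the outset, which completes the argument.
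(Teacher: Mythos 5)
The paper does not actually prove this theorem --- it is quoted as a background result with a citation to Berlinet and Thomas-Agnan --- so there is no in-paper argument to compare against; your proposal must stand on its own, and it does: it is the standard Moore--Aronszajn construction, and the step you correctly identify as the crux (a $\lVert\cdot\rVert_{\mathcal{H}_0}$-Cauchy sequence that tends to zero pointwise is norm-null) is argued correctly via $\langle f_n-f_m,f_m\rangle_{\mathcal{H}_0}=\sum_i\alpha_i\bigl(f_n(x_i)-f_m(x_i)\bigr)$ together with Cauchy--Schwarz. Two small points are left implicit and deserve a sentence each in a full write-up: (i) the existence of the limit defining $\langle f,g\rangle_{\mathcal{H}}:=\lim_n\langle f_n,g_n\rangle_{\mathcal{H}_0}$ needs the standard estimate $\lvert\langle f_n,g_n\rangle-\langle f_m,g_m\rangle\rvert\le\lVert f_n-f_m\rVert\,\lVert g_n\rVert+\lVert f_m\rVert\,\lVert g_n-g_m\rVert$ plus boundedness of Cauchy sequences, before your consistency lemma is used to show independence of the representing sequence; (ii) in the uniqueness step, instead of appealing to an abstract ``unique completion up to isometry,'' the cleaner finish is concrete: for $f\in\mathcal{H}'$ choose $f_n\in\mathcal{H}_0$ with $f_n\to f$ in $\mathcal{H}'$; the reproducing property in $\mathcal{H}'$ gives $f_n(x)=\langle f_n,k(\cdot,x)\rangle_{\mathcal{H}'}\to f(x)$, so $f$ is the pointwise limit of an $\mathcal{H}_0$-Cauchy sequence, hence $f\in\mathcal{H}$ with the same norm, and the reverse inclusion $\mathcal{H}\subseteq\mathcal{H}'$ follows symmetrically from completeness of $\mathcal{H}'$ and the fact that norm convergence in $\mathcal{H}'$ implies pointwise convergence. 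With those two routine additions, your argument is a complete and correct proof of the statement, matching the proof in the cited reference, including the identification of \eqref{eq:iprkhs} as the inner product on $\mathcal{H}_0$.
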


The important property while working in an RKHS is that the convergence in norm implies point-wise convergence; the converse need not be true. In other words, if two functions in an RKHS are close in the norm sense, they are also close point-wise. We will rely on this property to solve the problem discussed in this paper. In Theorem~\ref{th:ma}, $f$ and $g$ are defined only in $\Hcal_0$. The following theorem known as the representer theorem ensures that the solution of minimizing the regularized risk functional admits such a representation.

\begin{theorem}[Nonparametric Representer Theorem~\cite{scholkopf2001generalized}]\label{th:representer}
Let $\Xcal$ be a nonempty set and $\Hcal$ be an RKHS with reproducing kernel $k$ on $\Xcal \times \Xcal$. Suppose we are given a training sample $({x}_1,y_1),\dots,({x}_m,y_m) \in \Xcal \times \mathbb{R}$, a strictly monotonically increasing real-valued function $h$ on $[0, \infty)$, an arbitrary cost function $c:(\Xcal \times \mathbb{R}^2)^m \to \mathbb{R} \cup \{\infty\}$, and a class of functions~\footnote{$\mathbb{R}^{\Xcal}$ is the space of functions mapping $\Xcal$ to $\mathbb{R}$.}
\begin{align}
\resizebox{\columnwidth}{!}{$
\nonumber \Fcal = \{f \in \mathbb{R}^{\Xcal} | f(\cdot) = \sum_{i=1}^\infty \beta_i k(\cdot, {z}_i), \beta_i \in \mathbb{R}, {z}_i \in \Xcal, \lVert f \rVert_{\Hcal_{k}} < \infty \}.$}
\end{align}
Then any $f \in \Fcal$ minimizing the regularized risk functional 
\begin{equation}
\nonumber c(({x}_1,y_1,f(x_1)),\dots,({x}_m,y_m,f({x}_m))) + h(\lVert f \rVert_{\Hcal_{k}})
\end{equation}
admits a representation of the form
\begin{equation}
	f(\cdot) =  \sum_{i=1}^m \alpha_i k(\cdot, {x}_i).
\end{equation}
\end{theorem}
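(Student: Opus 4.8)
The plan is to prove the Nonparametric Representer Theorem (Theorem~\ref{th:representer}) by the classical orthogonal-decomposition argument. The key observation is that the cost term depends on $f$ only through its evaluations $f(x_1),\dots,f(x_m)$ at the training inputs, while the regularizer $h(\lVert f\rVert_{\Hcal_k})$ is monotone in the norm; so any component of $f$ orthogonal to the span of $\{k(\cdot,x_i)\}_{i=1}^m$ is invisible to the cost but can only increase the regularizer, hence can be discarded without loss.

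\begin{proof}
Let $\Fcal_0 := \mathrm{span}\{k(\cdot,x_1),\dots,k(\cdot,x_m)\} \subseteq \Hcal_k$ and let $\Fcal_0^\perp$ be its orthogonal complement in $\Hcal_k$. Since $\Fcal_0$ is a finite-dimensional (hence closed) subspace of the Hilbert space $\Hcal_k$, every $f \in \Fcal$ decomposes uniquely as
\begin{equation}
\nonumber f = \sum_{i=1}^m \alpha_i k(\cdot, x_i) + v, \qquad \alpha_i \in \mathbb{R}, \quad v \in \Fcal_0^\perp .
\end{equation}
First I would evaluate $f$ at an arbitrary training point $x_j$ using the reproducing property: $f(x_j) = \langle f, k(\cdot,x_j)\rangle = \sum_{i=1}^m \alpha_i \langle k(\cdot,x_i), k(\cdot,x_j)\rangle + \langle v, k(\cdot,x_j)\rangle$. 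Since $k(\cdot,x_j) \in \Fcal_0$ and $v \in \Fcal_0^\perp$, the last inner product vanishes, so $f(x_j)$ does not depend on $v$ at all; consequently the cost term $c\bigl((x_1,y_1,f(x_1)),\dots,(x_m,y_m,f(x_m))\bigr)$ is unchanged if we replace $f$ by $f_0 := \sum_{i=1}^m \alpha_i k(\cdot,x_i)$.

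Next I would handle the regularizer. By Pythagoras in $\Hcal_k$, $\lVert f \rVert_{\Hcal_k}^2 = \lVert f_0 \rVert_{\Hcal_k}^2 + \lVert v \rVert_{\Hcal_k}^2 \geq \lVert f_0 \rVert_{\Hcal_k}^2$, with equality iff $v = 0$. Since $h$ is strictly monotonically increasing on $[0,\infty)$, it follows that $h(\lVert f_0 \rVert_{\Hcal_k}) \leq h(\lVert f \rVert_{\Hcal_k})$. Combining the two facts, the regularized risk functional evaluated at $f_0$ is no larger than at $f$. Therefore any minimizer $f \in \Fcal$ must have $v = 0$ (otherwise $f_0$ would be a strictly better candidate whenever $h$ is strictly increasing, and at least an equally good one in general), which yields the claimed representation $f(\cdot) = \sum_{i=1}^m \alpha_i k(\cdot,x_i)$.
\end{proof}

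**The main obstacle** I anticipate is a technical one about the function class $\Fcal$: its members are defined as countably-infinite expansions $\sum_{i=1}^\infty \beta_i k(\cdot,z_i)$ with $z_i \in \Xcal$ arbitrary, so one must check that the orthogonal projection onto the finite-dimensional subspace $\Fcal_0$ is well defined and that the remainder $v$ again lies in $\Hcal_k$ with $\lVert v\rVert_{\Hcal_k} < \infty$ — this is where completeness of the RKHS and closedness of $\Fcal_0$ are essential. A second, milder subtlety is the case where $h$ is only weakly increasing (strict monotonicity is assumed here, so this is moot): then $f_0$ is \emph{a} minimizer but not necessarily the unique one, and the statement should be read as "admits a representation of the form," which is exactly what is claimed. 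Everything else is a routine application of the reproducing property and the Pythagorean identity, so no lengthy computation is needed.
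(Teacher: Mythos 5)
Your proof is correct: the orthogonal decomposition $f = f_0 + v$ with $f_0 \in \mathrm{span}\{k(\cdot,x_i)\}_{i=1}^m$, the reproducing-property argument showing the cost term ignores $v$, and the Pythagorean/strict-monotonicity argument forcing $v=0$ at any minimizer is exactly the standard representer-theorem argument of the cited reference~\cite{scholkopf2001generalized}. Note that the paper itself states this theorem without proof (it is quoted as a known result), so there is nothing in the paper to diverge from; your handling of the infinite expansions in $\Fcal$ via the finite-dimensionality (hence closedness) of the span and the finiteness of $\lVert f\rVert_{\Hcal_k}$ adequately addresses the only technical subtlety.
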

\section{Problem Setup}
\label{sec:problem}

Let $M$ be a smooth manifold and consider two (finite) collections of points, $X=\{x_i\}$, $Z=\{z_j\}\subset M$. Also, suppose we have a (Lie) group, $\Gcal$, acting on $M$. We want to determine which element $h\in \Gcal$ aligns the two point clouds $X$ and $hZ = \{hz_j\}$ the ``best.'' To assist with this, we will assume that each point contains information described by a point in an inner product space, $(\mathcal{I},\langle\cdot,\cdot\rangle_{\mathcal{I}})$. To this end, we will introduce two labeling functions, $\ell_X:X\to\Ical$ and $\ell_Z:Z\to\Ical$.

In order to measure their alignment, we will be turning the clouds, $X$ and $Z$, into functions $f_X,f_Z:M\to\Ical$ that live in some reproducing kernel Hilbert space, $(\Hcal,\langle\cdot,\cdot\rangle_{\mathcal{H}})$.
\begin{remark}
	The action, $\Gcal\curvearrowright M$ induces an action $\Gcal\curvearrowright C^\infty(M)$ by
	$$h.f(x) := f(h^{-1}x).$$
	Inspired by this observation, we will set $h.f_Z := f_{h^{-1}Z}$.
\end{remark}
\begin{problem}\label{prob:problem}
	The problem of aligning the point clouds can now be rephrased as maximizing the scalar products of $f_X$ and $h.f_Z$. i.e. We want to solve
	\begin{equation}\label{eq:max}
		\arg\max_{h\in \Gcal} \, F(h),\quad F(h):= \langle f_X, h.f_Z\rangle_{\mathcal{H}}.
	\end{equation}
\end{problem}
%%%%%%%%%%%%%%%%%%%%%%%%%%%%%%%%%%%%%%%%%%%%%%%%%%%%%%%%%%%%%%%%%%%%%%%%%%%%%%
\subsection{Constructing the functions}
We first choose a symmetric function $k:M\times M\to \mathbb{R}$ to be the kernel of our RKHS, $\Hcal$. This allows us to turn the point clouds to functions via
\begin{equation}
\begin{split}
	f_X(\cdot) &:= \sum_{x_i\in X} \, \ell_X(x_i) k(\cdot,x_i), \\
	f_Z(\cdot) &:= \sum_{z_j\in Z} \, \ell_Z(z_j) k(\cdot,z_j).
\end{split}
\end{equation}
We can now define the inner product of $f_X$ and $f_Z$ by
\begin{equation}\label{eq:scalar}
\langle f_X,f_Z\rangle_{\Hcal} := \sum_{\substack{x_i\in X\\ z_j\in Z}} \, \langle \ell_X(x_i),\ell_Z(z_j)\rangle_{\mathcal{I}} \cdot k(x_i,z_j).
\end{equation}
\begin{remark}
	We note two advantages of measuring the alignment of $X$ and $Z$ by \eqref{eq:scalar}. The first is that we do not need identification of which point of $X$ should be paired with what point of $Z$. The second is that the number of points in $X$ does not even need to be equal to the number of points in~$Z$!
\end{remark}
%%%%%%%%%%%%%%%%%%%%%%%%%%%%%%%%%%%%%%%%%%%%%%%%%%%%%%%%%%%%%%%%%%%%%%%%%%%%%%%%
\subsection{Building the Gradient Flow}
In order to (at least locally) solve~\eqref{eq:max}, we will construct a gradient flow: $\dot{h}=\nabla F(h)$. Before we can do this, we will first determine the differential, $dF$. In order to do this, we will need the notion of an infinitesimal generator for a group action (see chapter 4 of \cite{berndt2001introduction}). 
\begin{definition}
	Suppose that a Lie Group $\Gcal$ acts diffeomorphically on a smooth manifold $M$ via $\varphi$; that is
	\begin{equation}
	\begin{split}
		\varphi:\Gcal&\to \diff(M) \\
		g &\mapsto \varphi_g.
	\end{split}
	\end{equation}
	For a given $\xi\in\mathfrak{g}=\Lie(\Gcal)$, we denote the vector field $\xi_M$ (called the infinitesimal generator) on $M$ given by the rule:
	\begin{equation}
		df_x(\xi_M) := \left.\frac{d}{dt}\right|_{t=0} \, f(\varphi_{\exp(t\xi)}(x)),\quad f\in C^\infty(M).
	\end{equation}
\end{definition}
This lets us compute the differential, $dF$.

\begin{remark}
To make notation more concise for the remainder of this paper, we will denote $c_{ij} := \langle \ell_X(x_i),\ell_Z(z_j)\rangle_{\Ical}$.
\end{remark}

\begin{theorem}
	Suppose that $F(h) = \langle f_X,h.f_Z\rangle_{\mathcal{H}}$ as described above. Then
	\begin{equation}\label{eq:differential}
	dF_e(\xi) = \sum_{\substack{x_i\in X\\ z_j\in Z}} \, c_{ij}\cdot d\left(\tilde{k}_{x_i}\right)_{z_j}\left(-\xi_M(z_j)\right),
	\end{equation}
	where $\tilde{k}_{x_i} = k(x_i,\cdot)$.
\end{theorem}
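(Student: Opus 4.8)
The plan is to reduce $F$ to an explicit finite sum of kernel evaluations and then differentiate along a one-parameter subgroup. First I would substitute the construction of $f_X$ and the induced action $h.f_Z := f_{h^{-1}Z}$ into $F$. Since $f_{h^{-1}Z}(\cdot) = \sum_{z_j\in Z}\ell_Z(z_j)\,k(\cdot, h^{-1}z_j)$ (the label of a transported point being the label of the original point), bilinearity of the $\Ical$-valued inner product together with \eqref{eq:scalar} gives
\[
 F(h) \;=\; \langle f_X, h.f_Z\rangle_{\Hcal} \;=\; \sum_{x_i\in X,\, z_j\in Z} c_{ij}\, k\!\left(x_i, h^{-1}z_j\right) \;=\; \sum_{x_i\in X,\, z_j\in Z} c_{ij}\, \tilde{k}_{x_i}\!\left(h^{-1}z_j\right).
\]
Here the two sums are finite, so no convergence question arises, and I would invoke the standing smoothness of $k$ (so that $\tilde{k}_{x_i} = k(x_i,\cdot)\in C^\infty(M)$ and its differential is defined).

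Next I would compute $dF_e(\xi)$ by evaluating along the curve $t\mapsto \exp(t\xi)$, whose velocity at $t=0$ is $\xi\in\mathfrak{g}$; since $F$ is a finite sum of smooth functions of $h$ (smoothness of the action and of $k$), this is legitimate and the derivative passes inside the sum:
\[
 dF_e(\xi) \;=\; \left.\frac{d}{dt}\right|_{t=0} F\!\left(\exp(t\xi)\right) \;=\; \sum_{x_i\in X,\, z_j\in Z} c_{ij}\, \left.\frac{d}{dt}\right|_{t=0} \tilde{k}_{x_i}\!\left(\varphi_{\exp(-t\xi)}(z_j)\right),
\]
using $\exp(t\xi)^{-1}=\exp(-t\xi)$.

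The last step is to recognize the inner time-derivative as a directional derivative along the infinitesimal generator. Substituting $s=-t$ and using the chain rule,
\[
 \left.\frac{d}{dt}\right|_{t=0}\tilde{k}_{x_i}\!\left(\varphi_{\exp(-t\xi)}(z_j)\right) \;=\; -\left.\frac{d}{ds}\right|_{s=0}\tilde{k}_{x_i}\!\left(\varphi_{\exp(s\xi)}(z_j)\right) \;=\; -\,d\!\left(\tilde{k}_{x_i}\right)_{z_j}\!\left(\xi_M(z_j)\right) \;=\; d\!\left(\tilde{k}_{x_i}\right)_{z_j}\!\left(-\xi_M(z_j)\right),
\]
where the middle equality is precisely the definition of $\xi_M$ applied to $f=\tilde{k}_{x_i}$ at $x=z_j$, and the last uses linearity of $d(\tilde{k}_{x_i})_{z_j}$. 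Plugging this back into the previous display yields \eqref{eq:differential}. The computation is mostly bookkeeping; the one place that needs care is the sign, which originates solely from the inverse in $h.f_Z = f_{h^{-1}Z}$ and must be carried through the chain rule so that it lands on the generator $\xi_M(z_j)$ and is not silently absorbed elsewhere. A minor secondary point is to confirm, before differentiating, that $k(x_i,\cdot)$ is differentiable at each $z_j$ so that $d(\tilde{k}_{x_i})_{z_j}$ and $\xi_M(z_j)$ pair as claimed.
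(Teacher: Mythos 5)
Your proposal is correct and follows essentially the same route as the paper's proof: expand $F$ into the finite sum $\sum_{i,j} c_{ij}\,k(x_i,h^{-1}z_j)$, differentiate along $t\mapsto\exp(t\xi)$, and apply the chain rule together with the definition of the infinitesimal generator, with the minus sign coming from the inverse in $h.f_Z=f_{h^{-1}Z}$. Your write-up is simply a more explicit version of the paper's ``straight-forward application of the chain rule,'' spelling out the sign bookkeeping and the (finite-sum) interchange of derivative and summation.
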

\begin{remark}
	The notation for the differential of a function used throughout this paper is $df_x(v)$, where $x\in M$ and $v\in T_xM$:
	\begin{equation}
	df_x(v) = \left.\frac{d}{dt}\right|_{t=0} \, f(c(t)),\quad c(0)=x,\quad c'(0)=v.
	\end{equation}
\end{remark}
\begin{proof}
	This follows from a straight-forward application of the chain rule. 
	\begin{equation}
	\begin{split}
	dF_e(\xi) &= \left.\frac{d}{dt}\right|_{t=0} \, \langle f_X,\exp(t\xi).f_Z\rangle_{\Hcal} \\
	&= \sum \, c_{ij}\cdot \left.\frac{d}{dt}\right|_{t=0}\, k\left(x_i,\exp(-t\xi)z_j\right) \\
	&= \sum \, c_{ij}\cdot d\left(\tilde{k}_{x_i}\right)_{z_j} \cdot \left.\frac{d}{dt}\right|_{t=0} \, \exp(-t\xi)z_j \\
	&= \sum_{\substack{x_i\in X\\ z_j\in Z}} \, c_{ij} \cdot d\left(\tilde{k}_{x_i}\right)_{z_j}\left(-\xi_M(z_j)\right).
	\end{split}
	\end{equation}
	Which matches equation \eqref{eq:differential}.
\end{proof}
Of course, to construct the gradient flow we are interested in computing $dF_h$ instead of just $dF_e$. We will accomplish this via left-translation. Left-translation is given by the smooth map $\ell_h:\Gcal\to \Gcal$ where $x\mapsto hx$. Its differential gives rise to an isomorphism of tangent spaces, $(\ell_h)_*:\mathfrak{g}\xrightarrow{\sim} T_h\Gcal$.
\begin{corollary}
	Under the identification $T_h\Gcal \cong (\ell_h)_*\mathfrak{g}$, we have
	\begin{equation}
	\begin{split}
	dF_h&\left( (\ell_h)_*\xi\right) = \\
	&=
	\sum_{\substack{x_i\in X\\ z_j\in Z}} \, c_{ij} \cdot
	d\left(\tilde{k}_{x_i}\right)_{h^{-1}z_j}\left(-\xi_M(h^{-1}z_j)\right),
	\end{split}
	\end{equation}
	where $c_{ij} = \langle \ell_X(x_i),\ell_Z(z_j)\rangle_{\Ical}$.
\end{corollary}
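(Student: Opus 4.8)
The plan is to obtain $dF_h$ from the identity-based formula \eqref{eq:differential} of the preceding Theorem by pre-composing $F$ with left-translation. The first step is to make the objective completely explicit: by \eqref{eq:scalar} and the convention $g.f_Z := f_{g^{-1}Z}$, we have $F(g) = \langle f_X, g.f_Z\rangle_{\Hcal} = \sum_{x_i\in X,\, z_j\in Z} c_{ij}\, k\big(x_i, g^{-1}z_j\big)$. Now set $G := F\circ \ell_h$, so that $G(g) = F(hg) = \sum_{ij} c_{ij}\, k\big(x_i, (hg)^{-1}z_j\big) = \sum_{ij} c_{ij}\, k\big(x_i, g^{-1}(h^{-1}z_j)\big)$, where I used the inverse identity $(hg)^{-1} = g^{-1}h^{-1}$. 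Thus $G$ is \emph{exactly} the objective of Problem~\ref{prob:problem} for the same fixed cloud $X$ but with the moving cloud $Z$ replaced by $h^{-1}Z = \{h^{-1}z_j\}$, the labels transported by $\ell_{h^{-1}Z}(h^{-1}z_j) := \ell_Z(z_j)$, so that the scalar coefficients $\langle \ell_X(x_i), \ell_{h^{-1}Z}(h^{-1}z_j)\rangle_{\Ical} = c_{ij}$ are unchanged.

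The second step is to apply the Theorem to $G$ (legitimate since $G$ is of the required form) and to convert $dG_e$ into $dF_h$ by the chain rule. The Theorem gives $dG_e(\xi) = \sum_{x_i\in X,\, z_j\in Z} c_{ij}\, d(\tilde{k}_{x_i})_{h^{-1}z_j}\big(-\xi_M(h^{-1}z_j)\big)$. On the other hand, the chain rule applied to $G = F\circ\ell_h$ yields $dG_e = dF_{\ell_h(e)}\circ (d\ell_h)_e = dF_h\circ(\ell_h)_*$, i.e. $dG_e(\xi) = dF_h\big((\ell_h)_*\xi\big)$ under the stated identification $T_h\Gcal \cong (\ell_h)_*\mathfrak{g}$. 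Comparing the two right-hand sides proves the corollary. As an independent sanity check I would also compute the left-hand side directly along the curve $t\mapsto h\exp(t\xi)$, which represents $(\ell_h)_*\xi$: expanding $F(h\exp(t\xi)) = \sum_{ij} c_{ij}\, k\big(x_i, \exp(-t\xi)h^{-1}z_j\big)$, differentiating the second argument of $k$ via the chain rule, and using $\tfrac{d}{dt}\big|_{t=0}\exp(-t\xi)(h^{-1}z_j) = -\xi_M(h^{-1}z_j)$ from the definition of the infinitesimal generator, recovers the same expression.

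I do not expect a genuine obstacle here — the statement is just the Theorem applied to a relabeled point cloud — so the work is entirely bookkeeping, and the three points to get right are: (i) the inverse identity $(hg)^{-1} = g^{-1}h^{-1}$, which is what makes the translated moving points $\exp(-t\xi)h^{-1}z_j$ (and, in the $G$-picture, makes the cloud become $h^{-1}Z$ rather than something else); (ii) the minus sign produced by the inverse in the convention $g.f_Z := f_{g^{-1}Z}$ and its passage, through linearity of $\xi\mapsto\xi_M$, into the argument $-\xi_M(h^{-1}z_j)$ of $d(\tilde{k}_{x_i})_{h^{-1}z_j}$; and (iii) verifying that $t\mapsto h\exp(t\xi)$ really does represent $(\ell_h)_*\xi$, so that the chain-rule identity $dG_e(\xi) = dF_h((\ell_h)_*\xi)$ is correctly invoked. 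Once these are fixed the result is immediate.
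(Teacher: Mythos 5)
Your proof is correct and follows exactly the route the paper intends: the paper states this corollary without a printed proof, relying on the preceding remark that $dF_h$ is obtained from the identity-based theorem ``via left-translation,'' which is precisely your $G = F\circ\ell_h$ / chain-rule argument (and your direct check along $t\mapsto h\exp(t\xi)$, using $(h\exp(t\xi))^{-1} = \exp(-t\xi)h^{-1}$, is the same computation written out). No gaps; the bookkeeping points (i)--(iii) you flag are handled correctly.
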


In order to turn the co-vector $dF_h\in T_h^*\Gcal$ into a vector $\nabla F_h\in T_h\Gcal$, we will use a left-invariant metric. This can be accomplished by defining an inner-product, $\langle\cdot,\cdot\rangle_\mathfrak{g}$ on $\mathfrak{g}$ and lifting to a (Riemannian) metric on $\Gcal$ via left-translation, i.e.
$$\langle (\ell_h)_*\eta,(\ell_h)_*\xi\rangle_{T_h\Gcal} := \langle \eta,\xi\rangle_{\mathfrak{g}}.$$
This allows us to define the gradient of $F$ as
\begin{equation}
\langle \nabla F_h,(\ell_h)_*\xi\rangle_{T_h\Gcal} = dF_h\left((\ell_h)_*\xi\right).
\end{equation}
This allows for a way to obtain a (local) solution to \eqref{eq:max} by following
\begin{equation}\label{eq:gradient_flow}
\dot{h} = \nabla F(h).
\end{equation}

\section{Special Euclidean Group}\label{sec:euclidean}
We will specialize the above treatment for the case where $\Gcal = \SE(n)$, the special Euclidean group in $n$ dimensions. We will likewise let $M=\mathbb{R}^n$ on which $\SE(n)$ acts in the usual way: let $(R,T)\in \SE(n)$ where $R\in \SO(n)$ and $T\in\mathbb{R}^n$,
$$(R,T).x = Rx+T,\quad x\in\mathbb{R}^n.$$
We will also choose the squared exponential kernel for $k:\mathbb{R}^n\times\mathbb{R}^n\to\mathbb{R}$:
\begin{equation}
k(x,y) = \sigma^2\exp\left(\frac{-\lVert x-y\rVert_n^2}{2\ell^2}\right),
\end{equation}
for some fixed real parameters $\sigma$ and $\ell$, and $\lVert\cdot\rVert_n$ is the standard Euclidean norm on $\mathbb{R}^n$. In order to determine the gradient flow \eqref{eq:gradient_flow}, we need to compute the infinitesimal generators of the action $\SE(n)\curvearrowright\mathbb{R}^n$ as well as decide on a left-invariant metric for $\se(n)=\Lie(\SE(n))$.
%%%%%%%%%%%%%%%%%%%%%%%%%%%%%%%%
\subsection{Infinitesimal Generator}
For a fixed $\xi\in\se(n)$, it has the form $\xi=(\omega,v)$ where $\omega\in\so(n)$ and $v\in\mathbb{R}^n$. Because the infinitesimal generator map $\mathfrak{g}\to \mathfrak{X}(M)$ is a Lie algebra homomorphism (where $\mathfrak{X}(M)$ is the space of all vector fields over $M$, see \textsection 27 of \cite{diffGeometry}), we see that $\xi_M = \omega_M+v_M$. A straight forward computation leads to
\begin{equation}
\xi_{\mathbb{R}^n}x = \hat{\omega}x+v,\quad \hat{\omega}\in \emph{Skew}(n),\quad v\in\mathbb{R}^n.
\end{equation}
%%%%%%%%%%%%%%%%%%%%%%%%%%%%%%
\subsection{Metric}
We need to choose a metric on $\se(n)$ to turn $dF_h$ into $\nabla F_h$. For this example, we will take a multiple of the Killing form on $\so(n)$ and the Euclidean norm on $\mathbb{R}^n$. That is,
\begin{equation}\label{eq:metric}
\langle (\omega,v),(\eta,u)\rangle_{\se(n)} =  b^2 \cdot\langle v,u\rangle_n - a^2\left(\frac{n-2}{2}\right)\cdot \tr(\omega\eta),
\end{equation}
where $\langle\cdot,\cdot\rangle_n$ is the standard Euclidean inner product on $\mathbb{R}^n$ (see \citep{park1994kinematic} for a discussion in three dimensions), as well as $a$ and $b$ are tuning parameters. The reason for the $(2-n)/2$ term is because with this normalization (with $a=1$) the skew matrices $E_{ij}-E_{ji}$ are orthonormal. Here $E_{ij}$ denotes the matrix with only zeros except for a $1$ in the $(i,j)$-coordinate.
%%%%%%%%%%%%%%%%%%%%%%%%%%%%%%
\subsection{Calculating the Gradient}
Before we find the gradient, let us first determine its differential (at the identity for simplicity). 
\begin{equation}
\resizebox{\columnwidth}{!}{$
\begin{split}
dF_e(\xi) &= \sum_{\substack{x_i\in X\\ z_j\in Z}} \, c_{ij} \cdot d\left(\tilde{k}_{x_i}\right)_{z_j}\left(-\xi_M(z_j)\right) \\
&= \sum_{\substack{x_i\in X\\ z_j\in Z}} \, c_{ij} \cdot 
\frac{1}{\ell^2}k(x_i,z_j)\cdot \langle (z_j-x_i) , \left( -\hat{\omega}z_j-v\right)\rangle_n.
\end{split}$}
\end{equation}
To turn $dF_e$ into $\nabla F_e$, we will compute $\nabla_\omega F_e$ and $\nabla_v F_e$ separately:
\begin{equation}
\begin{split}
-a^2&\left(\frac{n-2}{2}\right)\cdot\tr\left[(\nabla_\omega F_e)\hat{\omega}\right] = \\
&\sum_{\substack{x_i\in X\\ z_j\in Z}} \, c_{ij} \cdot 
\frac{1}{\ell^2}k(x_i,z_j)\cdot \langle (z_j-x_i) , \left( -\hat{\omega}z_j\right)\rangle_n,\\
b^2&\langle (\nabla_vF_e),v\rangle_n =\\
&\sum_{\substack{x_i\in X\\ z_j\in Z}} \, c_{ij} \cdot 
\frac{1}{\ell^2}k(x_i,z_j)\cdot \langle (z_j-x_i) , \left( -v\right)\rangle_n.
\end{split}
\end{equation}
To solve for this in coordinates, we will let $\{e^m\}_{m=1}^n$ be the standard orthonormal basis for $(\mathbb{R}^n,\langle\cdot,\cdot\rangle_n)$ and $\{ J^{pq} \}_{p<q} := \{E_{pq}-E_{qp}\}_{p<q}$ be as above. Then, the gradient becomes:
\begin{equation}
\resizebox{\columnwidth}{!}{$
\begin{split}
\left(\nabla_\omega F_e\right)^{pq} &= \frac{1}{a^2\ell^2} \sum_{\substack{x_i\in X\\ z_j\in Z}} \, c_{ij} \cdot 
k(x_i,z_j)\cdot \langle (z_j-x_i) , \left( -J^{pq}z_j\right)\rangle_n, \\
\left( \nabla_v F_e\right)^m &= \frac{1}{b^2\ell^2} \sum_{\substack{x_i\in X\\ z_j\in Z}} \, c_{ij} \cdot 
k(x_i,z_j)\cdot \langle (z_j-x_i) , \left( -e^m\right)\rangle_n.
\end{split}$}
\end{equation}
The above can be simplified by computing the inner product on the right hand side:
\begin{equation}
\begin{split}
\left(\nabla_\omega F_e\right)^{pq} &= \frac{1}{a^2\ell^2} \sum_{\substack{x_i\in X\\ z_j\in Z}} \, c_{ij} \cdot 
k(x_i,z_j)\cdot \left(x_i^pz_j^q-x_i^qz_j^p\right), \\
\left( \nabla_v F_e\right)^m &= \frac{1}{b^2\ell^2} \sum_{\substack{x_i\in X\\ z_j\in Z}} \, c_{ij} \cdot 
k(x_i,z_j)\cdot \left(x_i^m-z_j^m\right).
\end{split}
\end{equation}
Likewise, to translate away from the origin, we note that if $h=(R,T)\in\SE(n)$, $(\ell_h)_*(\hat{\omega},v) = (R\hat{\omega},Rv)$. Then, if we express the gradient as $\nabla F_h = (\ell_h)_*(\hat{\omega},v) = (R\hat{\omega},Rv)$, we get the following expression for $(\hat{\omega},v)\in\se(n)$:
\begin{equation}\label{eq:flow}
\begin{split}
\hat{\omega}^{pq} &= \frac{1}{a^2\ell^2} \sum_{\substack{x_i\in X\\ z_j\in Z}} \, c_{ij} \cdot 
k(x_i,\tilde{z}_j)\cdot \left(x_i^p\tilde{z}_j^q-x_i^q\tilde{z}_j^p\right), \\
 v^m &= \frac{1}{b^2\ell^2} \sum_{\substack{x_i\in X\\ z_j\in Z}} \, c_{ij} \cdot 
k(x_i,\tilde{z}_j)\cdot \left(x_i^m-\tilde{z}_j^m\right),
\end{split}
\end{equation}
where $\tilde{z}_j = h^{-1}z_j = R^{\transpose} z_j-R^{\transpose} T$.
%%%%%%%%%%%%%%%%%%%%%%%%%%%%%%%%%%%%%%%%%%%%%%%%%%%%%%%%%%%%%%%%%%%%%%%%%
\section{Analysis and Verification of Idea}
\label{sec:verification}
It is important to take a moment and examine when solving Problem \ref{prob:problem} actually causes the clouds to be best aligned. It is of course impossible to perfectly align two non-identical clouds. Presented below is a discussion of when the two clouds \textit{are} identical, when does the identity element in the group maximize \eqref{eq:max}?

Suppose that $Z=X$ and $\ell_Z=\ell_X$. Then the identity ideally should be a fixed-point of \eqref{eq:gradient_flow}. This leads to the following:
\begin{theorem}
	Assume that for all $h\in \Gcal$ and $x\in M$, $k(hx,hx)\leq k(x,x)$. Then the identity is a global maximum of $F$.
\end{theorem}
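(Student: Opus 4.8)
Since the theorem assumes $Z=X$ and $\ell_Z=\ell_X$, we have $f_Z=f_X$, so the objective reduces to $F(h)=\langle f_X,h.f_X\rangle_{\mathcal{H}}$ and its value at the identity is $F(e)=\langle f_X,f_X\rangle_{\mathcal{H}}=\lVert f_X\rVert_{\mathcal{H}}^2$. Thus the statement ``$e$ is a global maximum of $F$'' is exactly the inequality
\begin{equation}
\langle f_X,h.f_X\rangle_{\mathcal{H}}\;\le\;\lVert f_X\rVert_{\mathcal{H}}^2\qquad\text{for all }h\in\Gcal .
\end{equation}

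The plan is to read this off from the Cauchy--Schwarz inequality in $\mathcal{H}$. Since $F(h)=\langle f_X,h.f_X\rangle_{\mathcal{H}}\le\lVert f_X\rVert_{\mathcal{H}}\,\lVert h.f_X\rVert_{\mathcal{H}}$, it suffices to prove that the induced action on $\mathcal{H}$ does not increase the norm,
\begin{equation}
\lVert h.f_X\rVert_{\mathcal{H}}\;\le\;\lVert f_X\rVert_{\mathcal{H}}\qquad\text{for all }h\in\Gcal .
\end{equation}
Chaining the two displays then gives $F(h)\le\lVert f_X\rVert_{\mathcal{H}}^2=F(e)$, and the equality case reduces, via equality in Cauchy--Schwarz together with the equality of norms, to $h.f_X=f_X$, of which $h=e$ is the evident instance. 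So the whole argument hinges on the norm step.

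To establish the norm step I would expand both sides using the construction of the functions and the inner-product formula~\eqref{eq:scalar}. Writing $h.f_X=f_{h^{-1}X}$ (with the labels transported along $h^{-1}$) gives $\lVert h.f_X\rVert_{\mathcal{H}}^2=\sum_{i,j}c_{ij}\,k(h^{-1}x_i,h^{-1}x_j)$ and $\lVert f_X\rVert_{\mathcal{H}}^2=\sum_{i,j}c_{ij}\,k(x_i,x_j)$, where $c_{ij}=\langle\ell_X(x_i),\ell_X(x_j)\rangle_{\mathcal{I}}$, so that $(c_{ij})$ is a Gram matrix of the label vectors and is therefore symmetric positive semidefinite. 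The hypothesis $k(hx,hx)\le k(x,x)$, applied to $h^{-1}$, controls the diagonal terms, $k(h^{-1}x_i,h^{-1}x_i)\le k(x_i,x_i)$; and the reproducing property $k(h^{-1}x_i,h^{-1}x_j)=\langle k(\cdot,h^{-1}x_i),k(\cdot,h^{-1}x_j)\rangle_{\mathcal{H}}$ combined with Cauchy--Schwarz in $\mathcal{H}$ bounds each off-diagonal term by the geometric mean of the corresponding diagonal terms. The goal is then to combine these facts with the positive semidefiniteness of $(c_{ij})$ to conclude $\sum_{i,j}c_{ij}k(h^{-1}x_i,h^{-1}x_j)\le\sum_{i,j}c_{ij}k(x_i,x_j)$. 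It is worth isolating the clean case that the paper actually uses: when $\Gcal=\SE(n)$ acts on $M=\mathbb{R}^n$ by isometries and $k$ is the translation-invariant squared-exponential kernel, one has the much stronger fact $k(h^{-1}x_i,h^{-1}x_j)=k(x_i,x_j)$ for \emph{all} pairs $i,j$, so $f_X\mapsto h.f_X$ is literally an isometry of $\mathcal{H}$, $\lVert h.f_X\rVert_{\mathcal{H}}=\lVert f_X\rVert_{\mathcal{H}}$, and the whole proof collapses to the single Cauchy--Schwarz step.

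The step I expect to be the main obstacle is precisely the passage from the \emph{diagonal} control $k(hx,hx)\le k(x,x)$ to the comparison of the two full quadratic forms $\sum_{i,j}c_{ij}k(h^{-1}x_i,h^{-1}x_j)$ and $\sum_{i,j}c_{ij}k(x_i,x_j)$: domination of the transported kernel Gram matrix along its diagonal does not by itself force the comparison needed here, so the hypothesis must be used in concert with the positive semidefiniteness of the label Gram matrix $(c_{ij})$ and the off-diagonal Cauchy--Schwarz bound. In the isometric, kernel-invariant setting this difficulty disappears entirely. Everything else --- the reduction to the norm inequality, the Cauchy--Schwarz chain, and the equality discussion --- is routine.
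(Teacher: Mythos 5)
Your overall route is exactly the paper's: reduce the claim to the norm inequality $\lVert h.f_X\rVert_{\Hcal}\le\lVert f_X\rVert_{\Hcal}$ and finish with Cauchy--Schwarz. The step you flag as the obstacle is, however, not merely hard --- it is impossible from the hypothesis as you (literally) read it, so the repair you sketch (diagonal domination plus positive semidefiniteness of $(c_{ij})$ plus off-diagonal Cauchy--Schwarz) cannot be made to work. Concretely, take $M=\mathbb{R}$, $\Gcal=(\mathbb{R}_{>0},\times)$ acting by scaling, $k$ the squared-exponential kernel (so $k(hx,hx)=\sigma^2=k(x,x)$ and the diagonal hypothesis holds with equality), $X=\{0,2\ell\}$ and all labels equal to $1$, so $c_{ij}=1$. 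With $h^{-1}z=z/2$ one gets
\begin{equation}
F(h)=\sigma^2\left(1+2e^{-1/2}+e^{-2}\right)\;>\;\sigma^2\left(2+2e^{-2}\right)=F(e),
\end{equation}
so the identity is not a global maximum under the diagonal condition alone. Your instinct that ``diagonal control of the transported Gram matrix does not force the quadratic-form comparison'' is therefore correct, but no additional matrix inequality will close the gap; the hypothesis itself must be strengthened.

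What the paper's proof actually uses is the pairwise monotonicity $k(hx,hy)\le k(x,y)$ for all $x,y\in M$ (this is also what its corollaries instantiate: a stationary kernel together with an isometric action), applied term by term to
\begin{equation}
\lVert h.f_X\rVert_{\Hcal}^2=\sum_{x_i,x_j\in X} c_{ij}\,k\!\left(h^{-1}x_i,h^{-1}x_j\right)\;\le\;\sum_{x_i,x_j\in X} c_{ij}\,k\!\left(x_i,x_j\right)=\lVert f_X\rVert_{\Hcal}^2,
\end{equation}
which additionally presumes $c_{ij}\ge 0$ (automatic in the RGB-D application, where the labels are componentwise nonnegative color vectors). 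With that reading of the hypothesis your argument coincides with the paper's, and indeed in the $\SE(n)$/stationary-kernel case it collapses, as you note, to a single Cauchy--Schwarz step; the equality-case discussion you add is not needed for the statement. So the missing ingredient in your write-up is not a cleverer inequality but the strengthened (pairwise) hypothesis --- and, to be fair, the theorem as printed states only the diagonal version, so the imprecision you stumbled on is in the statement, not in your reduction.
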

\begin{proof}
	We have that
	\begin{equation}
	F(h) = \langle f_X,h.f_X\rangle_{\Hcal},\quad F(e) = \lVert f_X\rVert_{\Hcal}^2 \geq 0.
	\end{equation}
	Then using the Cauchy-Schwarz inequality we obtain:
	\begin{equation}
	\begin{split}
		\left|\langle f_X,h.f_X\rangle_{\Hcal}\right| &\leq \lVert f_X\rVert_{\Hcal} \cdot \lVert h.f_X\rVert_{\Hcal},
	\end{split}
	\end{equation}
	which is less than $F(e)$ provided that $\lVert h.f_X\rVert_{\Hcal}\leq \lVert f_X\rVert_{\Hcal}$. Computing this, we see that
	\begin{equation}
	\begin{split}
		\lVert h.f_X\rVert_{\Hcal}^2 &= \sum_{x_i,z_j\in X} \, c_{ij}\cdot k\left(h^{-1}x_i,h^{-1}z_j\right)\\
		&\leq \sum_{x_i,z_j\in X} \, c_{ij}\cdot k\left(x_i,z_j\right) = \lVert f_X\rVert_{\Hcal}^2.
	\end{split}
	\end{equation}
	Combining everything, we get that
	\begin{equation}
	\left|F(h)\right| \leq \lVert f_X\rVert_{\Hcal}\cdot \lVert h.f_X\rVert_{\Hcal} \leq \lVert f_X\rVert_{\Hcal}^2 = F(e).
	\end{equation}
\end{proof}
\begin{corollary}
	Suppose $k:M\times M\to\mathbb{R}$ is a stationary kernel~\citep[Page 82]{rasmussen2006gaussian}, that is $k(x,y) = k(d(x,y))$ for some distance function $d$. If $\Gcal$ acts isometrically on $M$, then the identity is a global maximum of $F$.
\end{corollary}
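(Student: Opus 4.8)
The plan is to obtain this as a direct consequence of the previous theorem, whose hypothesis reads $k(hx,hx)\le k(x,x)$ for all $h\in\Gcal$, $x\in M$. First I would observe that a stationary kernel automatically satisfies this: writing $k(x,y)=k(d(x,y))$ and evaluating on the diagonal,
\[
k(hx,hx)=k\big(d(hx,hx)\big)=k(0)=k\big(d(x,x)\big)=k(x,x),
\]
so the condition holds, in fact with equality, and the theorem gives that $e$ is a global maximum of $F$.

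Since that diagonal check never used the isometry assumption, I would then point out where isometry actually does its work, namely in making the off-diagonal step in the theorem's proof rigorous. That proof compares $\lVert h.f_X\rVert_{\Hcal}^2 = \sum_{x_i,z_j\in X} c_{ij}\,k(h^{-1}x_i,h^{-1}z_j)$ with $\lVert f_X\rVert_{\Hcal}^2 = \sum_{x_i,z_j\in X} c_{ij}\,k(x_i,z_j)$; with an isometric action one has $d(h^{-1}x_i,h^{-1}z_j)=d(x_i,z_j)$, hence $k(h^{-1}x_i,h^{-1}z_j)=k(x_i,z_j)$ term by term, so in fact $\lVert h.f_X\rVert_{\Hcal}=\lVert f_X\rVert_{\Hcal}$ exactly. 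Cauchy--Schwarz then yields $|F(h)|\le \lVert f_X\rVert_{\Hcal}\,\lVert h.f_X\rVert_{\Hcal}=\lVert f_X\rVert_{\Hcal}^2=F(e)$, which is the claim.

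I do not expect a genuine obstacle: the only ingredients are the elementary facts $d(x,x)=0$ and $d(h^{-1}x,h^{-1}y)=d(x,y)$, together with the identification $h.f_X=f_{h^{-1}X}$ already recorded in the problem setup, and finiteness of $X$ means every Hilbert-space expression is a finite sum, so no analytic subtleties intervene. The one thing to be careful about is simply to recognize that isometry is precisely what upgrades the weak diagonal hypothesis of the theorem into the term-by-term kernel equality needed to control $\lVert h.f_X\rVert_{\Hcal}$; once that is noted, the argument is a one-line specialization, and it even shows that equality $F(h)=F(e)$ holds exactly when $h.f_X=f_X$.
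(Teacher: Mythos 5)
Your proof is correct and follows exactly the route the paper intends: the corollary is left proofless because stationarity together with an isometric action gives $k(h^{-1}x_i,h^{-1}z_j)=k\bigl(d(h^{-1}x_i,h^{-1}z_j)\bigr)=k\bigl(d(x_i,z_j)\bigr)=k(x_i,z_j)$ term by term, so $\lVert h.f_X\rVert_{\Hcal}=\lVert f_X\rVert_{\Hcal}$ and the theorem's Cauchy--Schwarz argument applies verbatim. Your side observation is also apt: the theorem's hypothesis as literally written only constrains the diagonal, whereas its proof uses the off-diagonal comparison of $k(h^{-1}x_i,h^{-1}z_j)$ with $k(x_i,z_j)$; the isometry is precisely what supplies that step (here as an exact equality, which moreover removes any need for a sign condition on the $c_{ij}$).
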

\begin{corollary}
	The identity is a global maximum for the $\SE(n)$ case.
\end{corollary}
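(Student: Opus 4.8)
The plan is to simply verify that the $\SE(n)$ setup of \textsection\ref{sec:euclidean} satisfies the hypotheses of the preceding corollary, so that the conclusion follows immediately. Concretely, I would check two things: (i) that the squared exponential kernel chosen in \textsection\ref{sec:euclidean} is stationary, and (ii) that the standard action $\SE(n)\curvearrowright\mathbb{R}^n$ is by isometries.

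For (i): writing $d(x,y) = \lVert x-y\rVert_n$ for the Euclidean distance on $M=\mathbb{R}^n$, the kernel $k(x,y) = \sigma^2\exp\!\left(-\lVert x-y\rVert_n^2 / (2\ell^2)\right)$ depends on $x,y$ only through $d(x,y)$, i.e. $k(x,y) = \psi(d(x,y))$ with $\psi(r) = \sigma^2\exp(-r^2/(2\ell^2))$. Hence $k$ is stationary in the sense of the corollary. For (ii): given $h=(R,T)\in\SE(n)$ with $R\in\SO(n)$, and $x,y\in\mathbb{R}^n$, one has $h.x - h.y = (Rx+T)-(Ry+T) = R(x-y)$, and since $R^\transpose R = I_n$ this gives $\lVert h.x - h.y\rVert_n = \lVert R(x-y)\rVert_n = \lVert x-y\rVert_n$, so $d(h.x,h.y) = d(x,y)$ and the action is isometric.

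With (i) and (ii) in hand, the previous corollary applies verbatim and yields that the identity is a global maximum of $F$ for the $\SE(n)$ case. (Equivalently, one could bypass the stationarity corollary and invoke the theorem directly: isometry of the action gives $k(h.x,h.x) = \psi(0) = \sigma^2 = k(x,x)$ for every $h$ and $x$, so the hypothesis $k(hx,hx)\le k(x,x)$ of the theorem holds with equality.)

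There is essentially no hard step here — the statement is an immediate specialization. The only thing worth being careful about is matching conventions: confirming that ``$\Gcal$ acts isometrically'' in the corollary refers precisely to the metric $d$ appearing in the stationarity condition $k(x,y)=k(d(x,y))$, which is indeed the Euclidean norm $\lVert\cdot\rVert_n$ used to define the kernel in \textsection\ref{sec:euclidean}; once that identification is noted, the argument is complete.
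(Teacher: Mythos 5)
Your proposal is correct and matches the paper's proof, which likewise deduces the result from the preceding corollary by noting that the squared exponential kernel is stationary and that $\SE(n)$ acts isometrically on $\mathbb{R}^n$, i.e. $\lVert hx-hy\rVert_n = \lVert x-y\rVert_n$. Your extra remark that one could invoke the theorem directly via $k(hx,hx)=k(x,x)$ is a harmless, equally valid shortcut.
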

\begin{proof}
	This follows from the fact that $\SE(n)$ acts on $\mathbb{R}^n$ isometrically, i.e. $\lVert hx-hy\rVert_n = \lVert x-y\rVert_n$.
\end{proof}

\begin{theorem}
        The maximizer of Problem~\ref{prob:problem}, minimizes the angle between $f_X$ and $f_Z$.
\end{theorem}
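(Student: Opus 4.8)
The plan is to unwind the definitions and show that maximizing $F(h) = \langle f_X, h.f_Z\rangle_{\Hcal}$ is, up to a positive multiplicative constant, the same as maximizing the cosine of the angle between $f_X$ and $h.f_Z = f_{h^{-1}Z}$ in the Hilbert space $\Hcal$. Recall that in any real inner product space the angle $\theta$ between two nonzero vectors $u,v$ is defined by $\cos\theta = \langle u,v\rangle_{\Hcal} / (\lVert u\rVert_{\Hcal}\,\lVert v\rVert_{\Hcal})$, so I first want to write $F(h) = \lVert f_X\rVert_{\Hcal}\,\lVert h.f_Z\rVert_{\Hcal}\cos\theta(h)$.

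The key observation that makes this work is that $\lVert h.f_Z\rVert_{\Hcal}$ is independent of $h$ in the $\SE(n)$ setting (and more generally whenever $\Gcal$ acts isometrically on $M$ with a stationary kernel, exactly as in the Corollary above). Indeed, $\lVert h.f_Z\rVert_{\Hcal}^2 = \sum_{z_i,z_j\in Z} c^Z_{ij}\, k(h^{-1}z_i, h^{-1}z_j)$ where $c^Z_{ij} = \langle \ell_Z(z_i),\ell_Z(z_j)\rangle_{\Ical}$, and since $k$ is the squared-exponential kernel, $k(h^{-1}z_i,h^{-1}z_j) = \sigma^2\exp(-\lVert h^{-1}z_i - h^{-1}z_j\rVert_n^2/(2\ell^2)) = \sigma^2\exp(-\lVert z_i - z_j\rVert_n^2/(2\ell^2)) = k(z_i,z_j)$ by isometry of the $\SE(n)$ action. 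Hence $\lVert h.f_Z\rVert_{\Hcal} = \lVert f_Z\rVert_{\Hcal}$ for every $h$. Since $\lVert f_X\rVert_{\Hcal}$ does not involve $h$ at all, the factor $\lVert f_X\rVert_{\Hcal}\,\lVert h.f_Z\rVert_{\Hcal} = \lVert f_X\rVert_{\Hcal}\,\lVert f_Z\rVert_{\Hcal}$ is a positive constant (assuming neither function is identically zero).

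Putting these together: $F(h) = \big(\lVert f_X\rVert_{\Hcal}\,\lVert f_Z\rVert_{\Hcal}\big)\cos\theta(h)$, so $\arg\max_h F(h) = \arg\max_h \cos\theta(h) = \arg\min_h \theta(h)$, where the last equality uses that $\cos$ is strictly decreasing on $[0,\pi]$, the range of the angle. Therefore the maximizer of Problem~\ref{prob:problem} minimizes the angle between $f_X$ and $h.f_Z$, which since $h.f_Z = f_{h^{-1}Z}$ is exactly the angle between $f_X$ and the (transformed) second function; taking $h$ to be the optimizer identifies this with minimizing the angle between $f_X$ and $f_Z$ in the sense of the statement.

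The main obstacle is really just a matter of careful bookkeeping rather than a deep difficulty: one must be explicit about the hypotheses under which $\lVert h.f_Z\rVert_{\Hcal}$ is constant (isometric action plus stationary kernel, inherited from the preceding Corollary and its proof), and one must handle the degenerate case $f_X = 0$ or $f_Z = 0$ where the angle is undefined — these are excluded as trivial. A secondary subtlety is the sign of the constant $c^Z_{ij}$ and of $F(h)$ itself: the "angle" interpretation presumes we are comparing directions, and the cleanest statement is that the global maximizer aligns the two functions as closely as the data permit; if one wants the inner product to be nonnegative at the optimum, that follows because $F(e') \ge$ some reference value is not guaranteed in general, but $\cos\theta(h^\star) \ge \cos\theta(h)$ for all $h$ is exactly the content of being the maximizer, so no additional positivity argument is needed for the stated claim.
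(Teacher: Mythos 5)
Your proof is correct, but it takes a genuinely different route from the paper's. The paper fixes the maximizer $h^*$, writes $f_Z^* = h^*.f_Z$, and chases inequalities: it combines the optimality inequality $\langle f_X, f_Z^*\rangle_{\Hcal} \ge \langle f_X, f_Z\rangle_{\Hcal}$ with Cauchy--Schwarz and with an asserted norm bound $\lVert f_Z^*\rVert_{\Hcal} \le \lVert f_Z\rVert_{\Hcal}$, then divides through to get $0 \le \cos\theta \le \cos\theta^* \le 1$, hence $\theta^* \le \theta$, with cosines defined via absolute values so angles lie in $[0,\pi/2]$; the comparison is only against the untransformed $f_Z$. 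You instead prove an exact factorization: under the isometric $\SE(n)$ action and the stationary squared-exponential kernel, $\lVert h.f_Z\rVert_{\Hcal} = \lVert f_Z\rVert_{\Hcal}$ for every $h$, so $F(h) = \lVert f_X\rVert_{\Hcal}\,\lVert f_Z\rVert_{\Hcal}\cos\theta(h)$ and maximizing $F$ is literally equivalent to minimizing $\theta(h)$ over all of $\Gcal$, with no Cauchy--Schwarz and no absolute-value bookkeeping (since $\cos$ is strictly decreasing on $[0,\pi]$, even a negative-valued maximum still yields the smallest angle). Your route buys a stronger and cleaner conclusion (the maximizer minimizes the angle over the whole group, not merely relative to $f_Z$ at the identity), and it actually justifies the norm statement that the paper's proof asserts without argument; the price is that it explicitly requires norm invariance (isometric action plus stationary kernel), i.e., it is pinned to the $\SE(n)$ setting of the paper rather than the abstract setting in which the theorem is nominally stated, whereas the paper's one-sided inequality argument would survive under a weaker norm-nonincreasing hypothesis of the kind used earlier in that section. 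Your explicit exclusion of the degenerate case $f_X = 0$ or $f_Z = 0$, where the angle is undefined, is an appropriate addition that the paper omits.
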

\begin{proof}
        Suppose $h^* \in \Gcal$ is the maximizer of Problem~\ref{prob:problem}. Then $\langle f_X, f_Z^* \rangle \geq \langle f_X, f_Z \rangle$ and $\lVert f_Z^* \rVert_{\Hcal} \leq \lVert f_Z \rVert_{\Hcal}$. Using Cauchy-Schwarz inequality:
    \begin{equation}
    \resizebox{\hsize}{!}{$
        \nonumber 0 \leq \lvert \langle f_X, f_Z \rangle \rvert \leq \lvert \langle f_X, f_Z^* \rangle \rvert \leq \lVert f_X \rVert_{\Hcal} \lVert f_Z^* \rVert_{\Hcal} \leq \lVert f_X \rVert_{\Hcal} \lVert f_Z \rVert_{\Hcal}
        $}
    \end{equation}
    dividing by $\lVert f_X \rVert_{\Hcal} \lVert f_Z \rVert_{\Hcal}$ and replacing $\lVert f_Z \rVert_{\Hcal}$ in the denominator by $\lVert f_Z^* \rVert_{\Hcal}$:
    \begin{equation}
        \nonumber 0 \leq \cos(\theta) \leq \frac{\lvert \langle f_X, f_Z^* \rangle \rvert}{\lVert f_X \rVert_{\Hcal} \lVert f_Z^* \rVert_{\Hcal}} \leq \frac{\lVert f_Z^*\rVert_{\Hcal}}{\lVert f_Z^* \rVert_{\Hcal}} \leq 1
    \end{equation} 
    \begin{equation}
        \nonumber 0 \leq \cos(\theta) \leq \cos(\theta^*) \leq 1
    \end{equation}
    \begin{equation}
        \nonumber 0 \leq \theta^* \leq \theta \leq \frac{\pi}{2}
    \end{equation}
where $\cos(\theta) = \lvert \langle f_X, f_Z \rangle \rvert / (\lVert f_X \rVert_{\Hcal} \lVert f_Z \rVert_{\Hcal})$.
\end{proof}
\section{Integrating the Flow for The Special Euclidean Group}
\label{sec:senintegration}

Now that we know the direction for the flow, what remains is to determine a way to integrate the flow and to determine a reasonable step size. We integrate using the Lie exponential map to preserve the group structure and the step size is calculated using a $4^{th}$-order Taylor approximation in a line search algorithm.
%%%%%%%%%%%%%%%%%%%%%%%%%%%%%%%%%%%%%%%%%%%%%%%%%%%%%%%%%%%%%%%%%%%%%%%
\subsection{Integrating}
We will take care that in integrating \eqref{eq:gradient_flow}, our trajectories will remain on $SE(n)$. This is slightly problematic because integrating is an additive process and $SE(n)$ is not closed under addition. To address this, we note that if in \eqref{eq:gradient_flow}, $(\hat{\omega},v)$ is constant in $\se(n)$ (i.e. $\nabla F_h$ is a left-invariant vector field) the solution is merely
\begin{equation}
(R(t),T(t)) = (R_0,T_0)\exp(t(\hat{\omega},v)),
\end{equation}
where $\exp:\se(n)\to\SE(n)$ is the Lie exponential map (which is merely the matrix exponential). We will exploit this by assuming that $\hat{\omega}$ and $v$ are constant over each time step.
\begin{align}
\begin{bmatrix}
R_{k+1} & T_{k+1} \\ 0 & 1
\end{bmatrix} &= \begin{bmatrix}
R_{k} & T_{k} \\ 0 & 1
\end{bmatrix} \cdot \exp \begin{bmatrix}
t\hat{\omega} & t v \\ 0 & 0
\end{bmatrix} \\
&= \begin{bmatrix}
R_k & T_k \\ 0 & 1
\end{bmatrix}\begin{bmatrix}
\Delta R & \Delta T \\ 0 & 1
\end{bmatrix} \\
&= \begin{bmatrix}
R_k\Delta R & R_k\Delta T + T_k \\
0 & 1
\end{bmatrix}
\end{align}
Where explicit formulas for $\Delta R$ and $\Delta T$ will be discussed in \textsection\ref{subsec:se3} for the special case where $n=3$.
Combining all of this, we get our integration step to be
\begin{align}\label{eq:newstep}
R_{k+1} &= R_k \Delta R \\
T_{k+1} &= R_k\Delta T + T_k.
\end{align}
%%%%%%%%%%%%%%%%%%%%%%%%%%%%%%%%%%
\subsection{Step size}\label{sec:step} 
We can use \eqref{eq:flow} to point in  the direction of maximal growth and \eqref{eq:newstep} to find the updated element in $\SE(n)$. However, we currently have no intelligent way of choosing $t$. We will proceed by a Taylor approximation of the solution curve. If we let $G(t) := F(h\exp(t\xi))$, then we want to find the value of $t$ that maximizes $G$. We compute a $4^{th}$-order Taylor expansion of $G(t)$ about $t=0$ and determine the value of $t$ that maximizes this polynomial.
%\begin{equation}\label{eq:fourth}\begin{split}
%G(t) \approx \sumxz \,c_{ij}\cdot e^\alpha \left\{\beta t + \left(\gamma+\frac{1}{2}\beta^2\right)t^2 \right.~~&\\
%+ \left(\delta+\beta\gamma+\frac{1}{6}\beta^3\right) t^3~~& \\
%\left. + \left( \varepsilon + \beta\delta+\frac{1}{2}\beta^2\gamma + \frac{1}{2}\gamma^2 + \frac{1}{24}\beta^4\right)t^4\right\}&.
%\end{split}\end{equation}
\begin{equation}\label{eq:fourth}\begin{split}
G(t) \approx \sumxz \, c_{ij} \cdot e^{\alpha_{ij}} \left\{ 
g^1_{ij}t + g^2_{ij}t^2 + g^3_{ij}t^3 + g^4_{ij}t^4 \right\},
\end{split}\end{equation}
where 
\begin{equation}
\begin{split}
g^1_{ij} &= \beta_{ij} \\
g^2_{ij} &= \gamma_{ij}+\frac{1}{2}\beta_{ij}^2 \\
g^3_{ij} &= \delta_{ij}+\beta_{ij}\gamma_{ij}+\frac{1}{6}\beta_{ij}^3 \\
g^4_{ij} &= \varepsilon_{ij} + \beta_{ij}\delta_{ij}+\frac{1}{2}\beta_{ij}^2\gamma_{ij} + \frac{1}{2}\gamma_{ij}^2 + \frac{1}{24}\beta_{ij}^4 \\
\alpha_{ij} &= \frac{-1}{2\ell^2}\lVert x_i-z_j\rVert_n \\
\beta_{ij} &= \frac{-1}{\ell^2} \langle \hat{\omega}z_j+v,x_i-z_j\rangle_n \\
\gamma_{ij} &= \frac{-1}{2\ell^2}\left( \lVert \hat{\omega}z_j+v\rVert_n^2 + 2\langle \hat{\omega}^2z_j+v,x_i-z_j\rangle_n\right) \\
\delta_{ij} &= \frac{1}{\ell^2}\left( \langle -\hat{\omega}z_j-v,\hat{\omega}^2z_j +\hat{\omega}v\rangle_n \right. \\
&\left.\quad\quad\quad+ \langle -\hat{\omega}^3z_j-\hat{\omega}^2v,x_i-z_j\rangle_n\right) \\
\varepsilon_{ij} &= \frac{-1}{2\ell^2}\left( \lVert\hat{\omega}^2z_j+\hat{\omega}v\rVert_n^2 +2\langle \hat{\omega}z_j+v,\hat{\omega}^3z_j +\hat{\omega}^2v\rangle_n \right. \\
&\left. \quad\quad\quad+ 2\langle \hat{\omega}^4z_j+\hat{\omega}^3v,x_i-z_j\rangle_n\right)
\end{split}
\end{equation}
The ``optimal'' step size is then taken to be the value of $t>0$ that maximizes the quartic \eqref{eq:fourth}.

%%%%%%%%%%%%%%%%%%%%%%%%%%%%%%%%%%%%%%%%%%%%%%%%%%%%%%%%%%%%%%%%%%%%%%%%%%%

\subsection{Special Case: $n=3$}
\label{subsec:se3}

When we restrict attention to $\SE(3)$, the gradient \eqref{eq:flow} takes a special form:
\begin{equation}\label{eq:3_gradient}
\begin{split}
\omega = \frac{1}{a^2\ell^2} \, \sumxz c_{ij} k(x_i,h^{-1}z_j)\left(x_i\times(h^{-1}z_j)\right) \\
v = \frac{1}{b^2\ell^2}\, \sumxz c_{ij} k(x_i,h^{-1}z_j)\left( x_i - h^{-1}z_j\right).
\end{split}
\end{equation}
Additionally, an explicit formula for the exponential map $\exp:\se(3)\to\SE(3)$ exists (see \cite{1104.1106} and \cite{rohan2013} for example). This gives an exact way to solve \eqref{eq:newstep}.
\begin{equation}\label{eq:step}
\begin{split}
\Delta R &= I + \left( \frac{\sin t\theta}{\theta}\right)\hat{\omega} + 
\left(\frac{1-\cos t\theta}{\theta^2}\right)\hat{\omega}^2,\\
\Delta T &= \left[ t I + \left( \frac{1-\cos t\theta}{\theta^2}\right)\hat{\omega} + \left(
\frac{t\theta - \sin t\theta}{\theta^3}\right)\hat{\omega}^2
\right] v.
\end{split}
\end{equation}
where $\theta = \lVert \omega \rVert_3$ with $\omega\in\mathbb{R}^3$ and $t$ is taken to maximize $G(t)$ in equation \eqref{eq:fourth}.
%%%%%%%%%%%%%%%%%%%%%%%%%%%%%%%%%%%%%%%%%%%%%%%%%%%%%%%%%%%%%%%%%%%%%%%%%%%
\section{Evaluation and Discussion}
\label{sec:results}
% We now evaluate the proposed method using publicly available RGB-D benchmarks. 
First, we provide a numerical example to justify our choice of $4^{th}$-order Taylor expansion for the step length computation which in practice proven to be important. Next, we present the experimental evaluation using RGB-D data collected using the Microsoft Kinect sensor.    

\subsection{Taylor Expansion}
% In Section \ref{sec:step}, a 4$^{th}$-order Taylor expansion was used to find the optimal step length. 
This subsection explains (numerically) the reason for taking the extra complexity in computing the 4$^{th}$-order approximation as opposed to a 2$^{nd}$-order approximation.
The following numerical example uses the point clouds supplied in 
% Matlab's ``3-D Point Cloud Registration and Stitching" example, see 
MATLAB~\cite{cloudexample}. Here, $X$ is the first picture and $Z$ is the second. The parameter values used are $\sigma = 0.1$, $\ell = 0.15$, and $a^2 = b^2 = 7$.
% listed in Table \ref{tab:taylor_param}.

% \begin{table}[h!]
% 	\centering
% 	\begin{tabular}{ c | c }
% 		Parameter & Value \\ \hline
% 		$\sigma$ & 0.1 \\  
% 		$\ell$ & 0.15   \\
% 		$a^2$ & 7 \\
% 		$b^2$ & 7  
% 	\end{tabular}
% 	\caption{The chosen set of parameters for the time step example.}
% 	\label{tab:taylor_param}
% \end{table}
With these chosen set of parameters, the computed gradient gives
$$\hat{\omega} = \begin{bmatrix}
0 &   3.6452 &  -38.0800\\
-3.6452   &      0 &  -1.2370\\
38.0800  &  1.2370  &       0
\end{bmatrix},\quad v = \begin{bmatrix}
-9.3792\\
1.5252\\
-8.9388
\end{bmatrix}.$$
Figure \ref{fig:comparisons} contains the plots comparing the exact function, $G(t)$, along with its $2^{nd}$- and $4^{th}$-order Taylor approximation.
\begin{figure}[t]
	\centering
	\includegraphics[width=0.7\columnwidth]{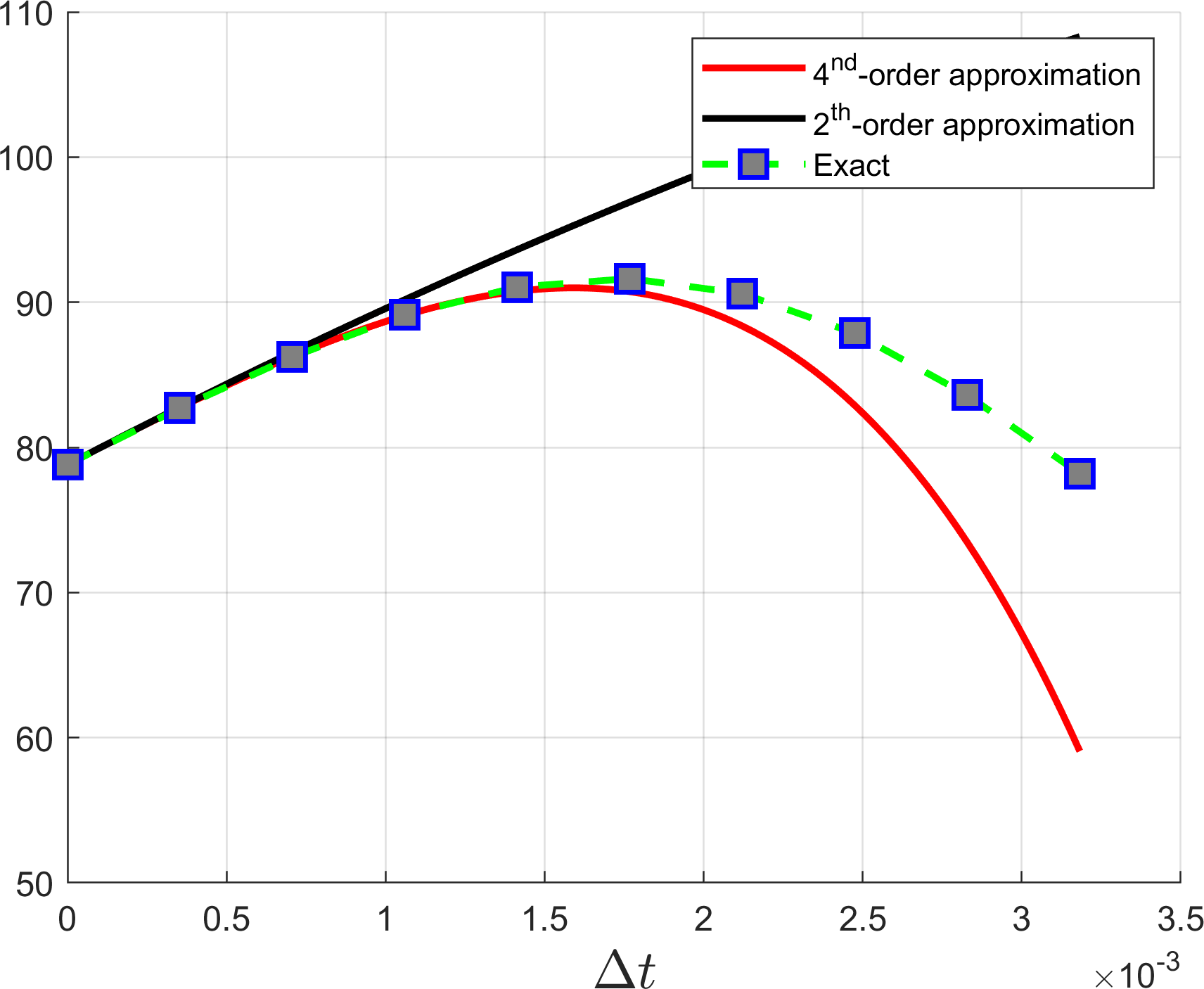}
	\caption{Comparing the exact values of $G(t)$ with its Taylor approximations. It is clear that the 2$^{nd}$-order approximation cannot capture the function behavior and, despite extra complexity, the approximation up to the 4$^{th}$-order is necessary to maintain the integration of the flow smooth.}
	\label{fig:comparisons}
	\squeezeup\squeezeup
\end{figure}
As can be seen from the figure, although the 2$^{nd}$-order approximation is a reasonable approximation for $t < 0.001$, it does not give a meaningful approximation for the maximum of $G(t)$. However, the 4$^{th}$-order does.

\begin{figure*}[t]
    \centering
    \subfloat{\includegraphics[width=0.5\columnwidth]{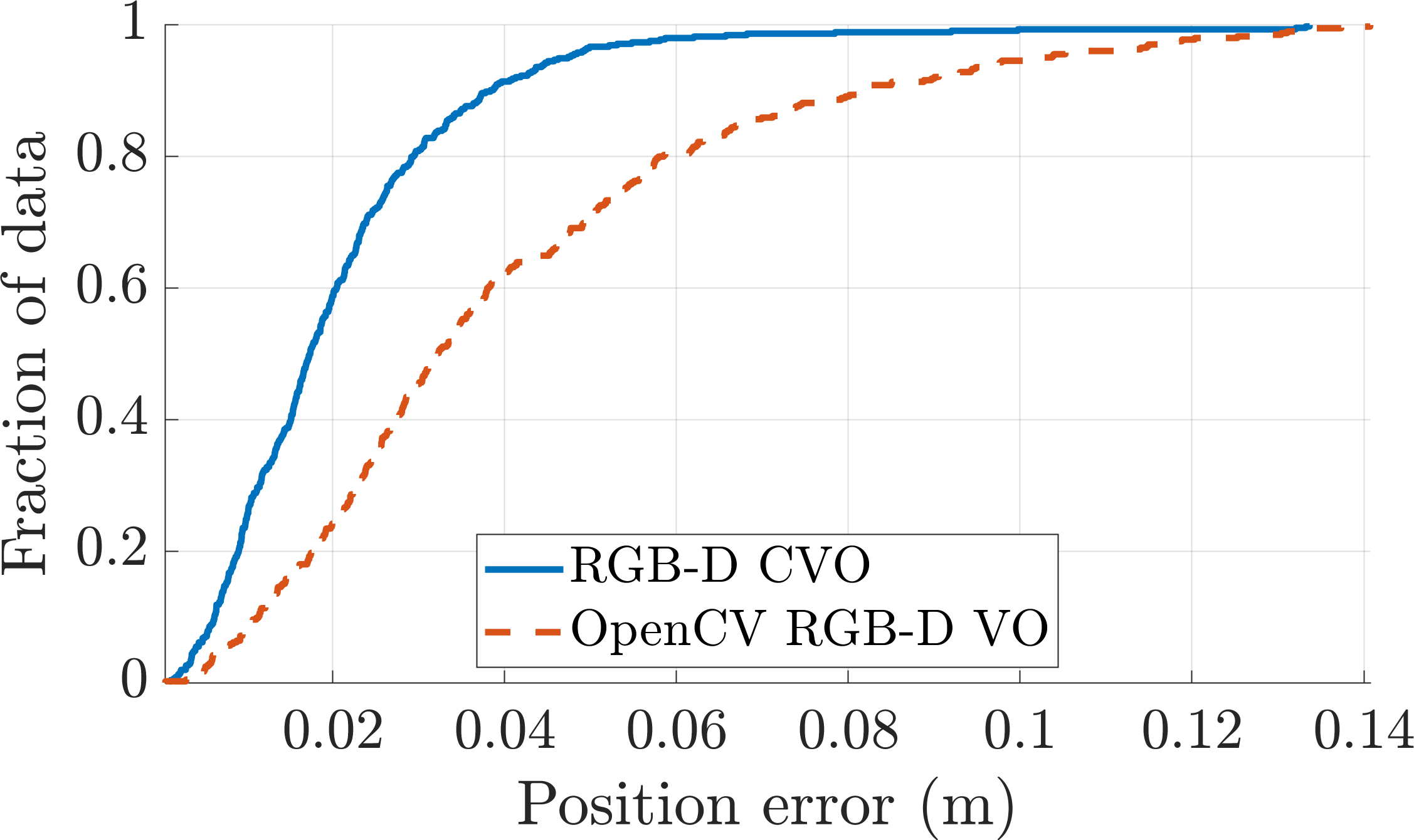}}
    \subfloat{\includegraphics[width=0.5\columnwidth]{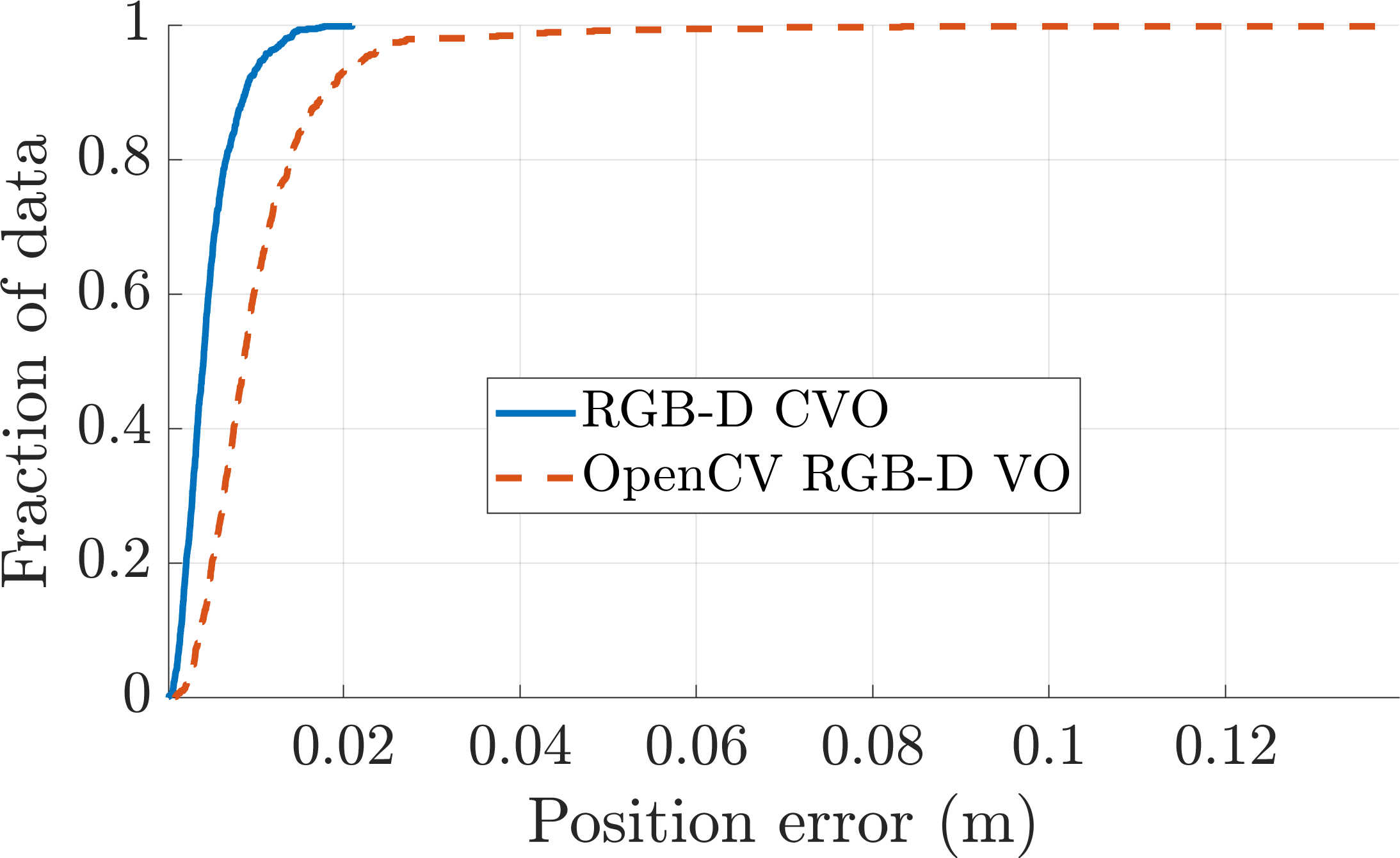}}
    \subfloat{\includegraphics[width=0.5\columnwidth]{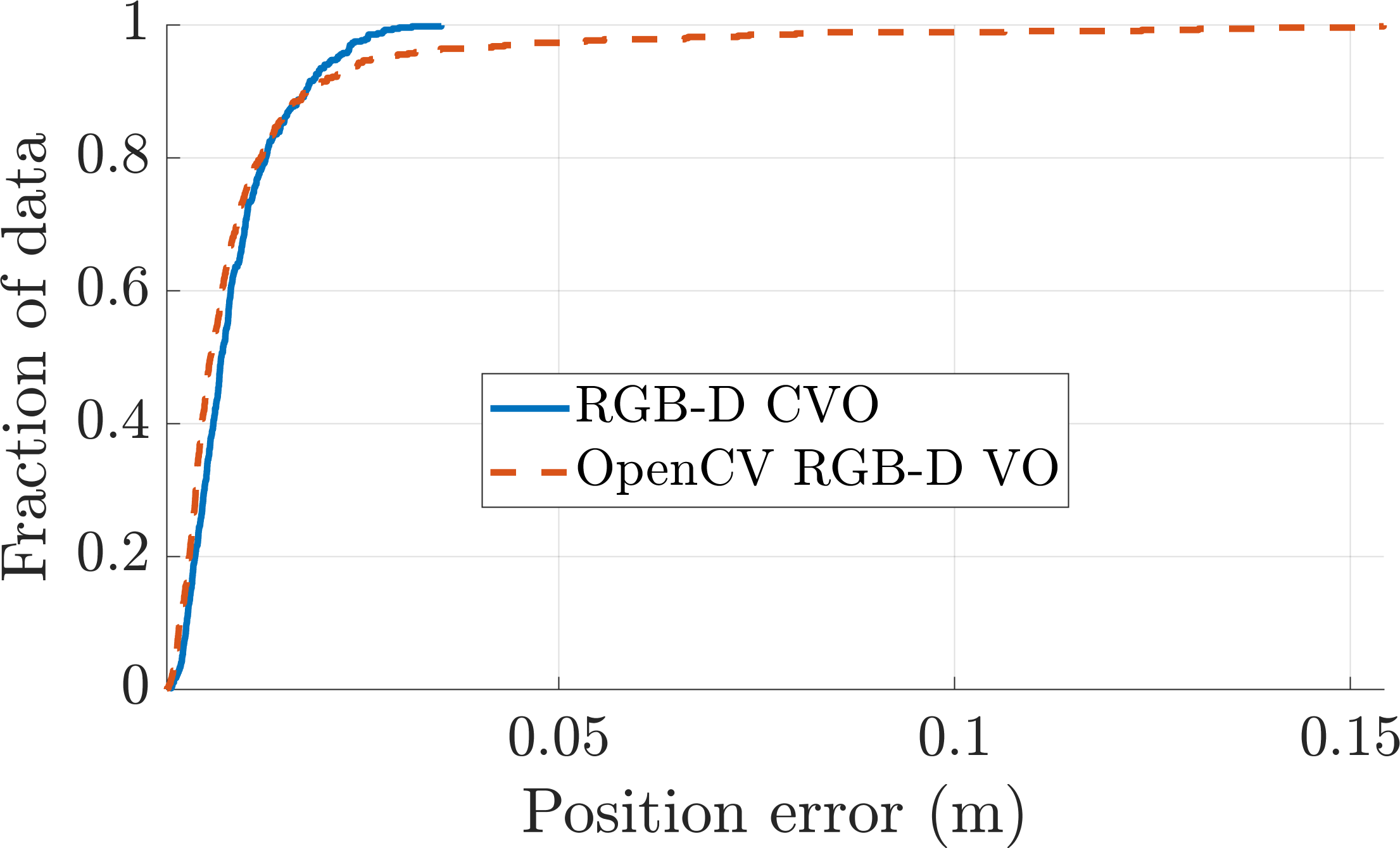}}
    \subfloat{\includegraphics[width=0.5\columnwidth]{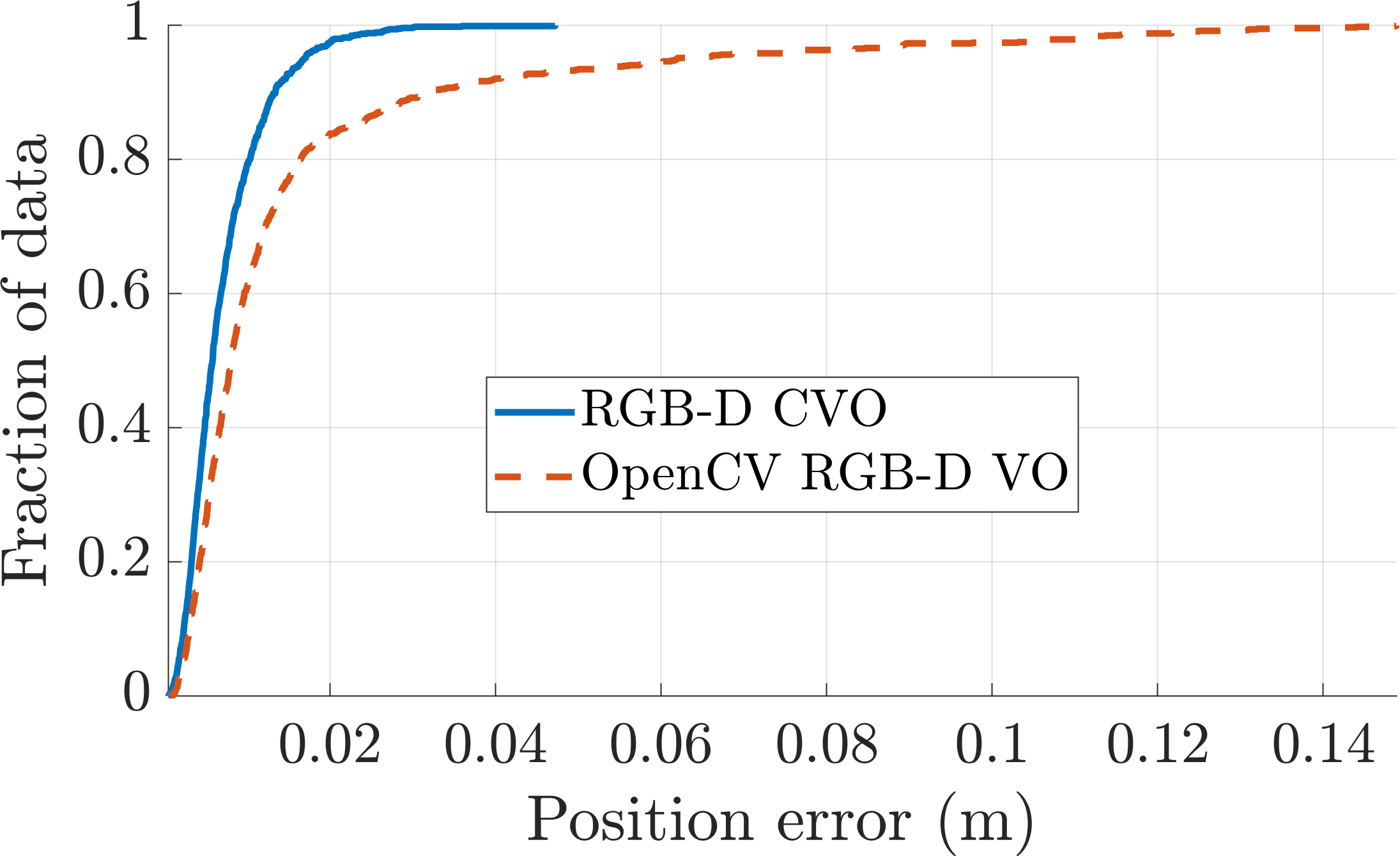}}\\
    \subfloat{\includegraphics[width=0.5\columnwidth]{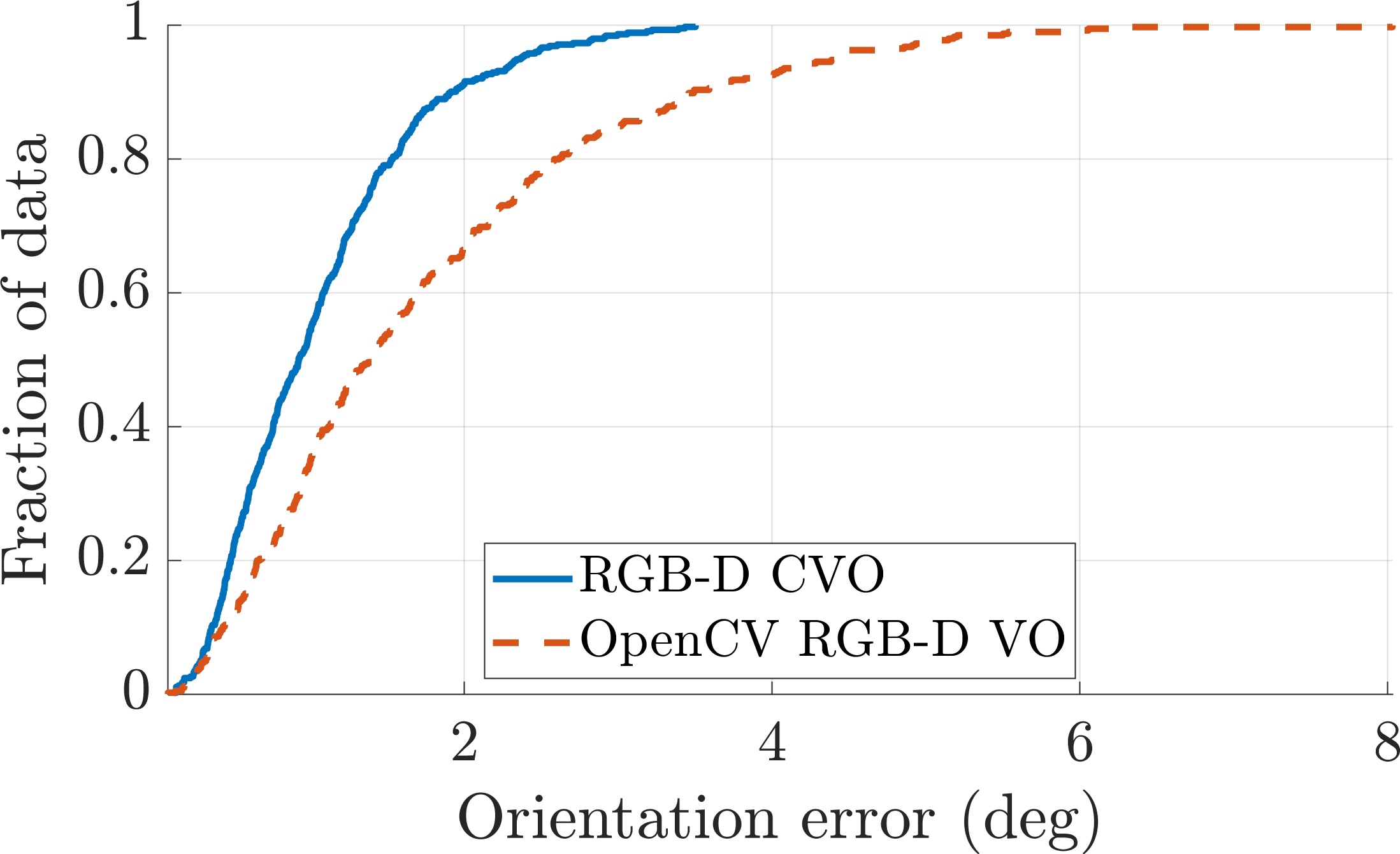}}
    \subfloat{\includegraphics[width=0.5\columnwidth]{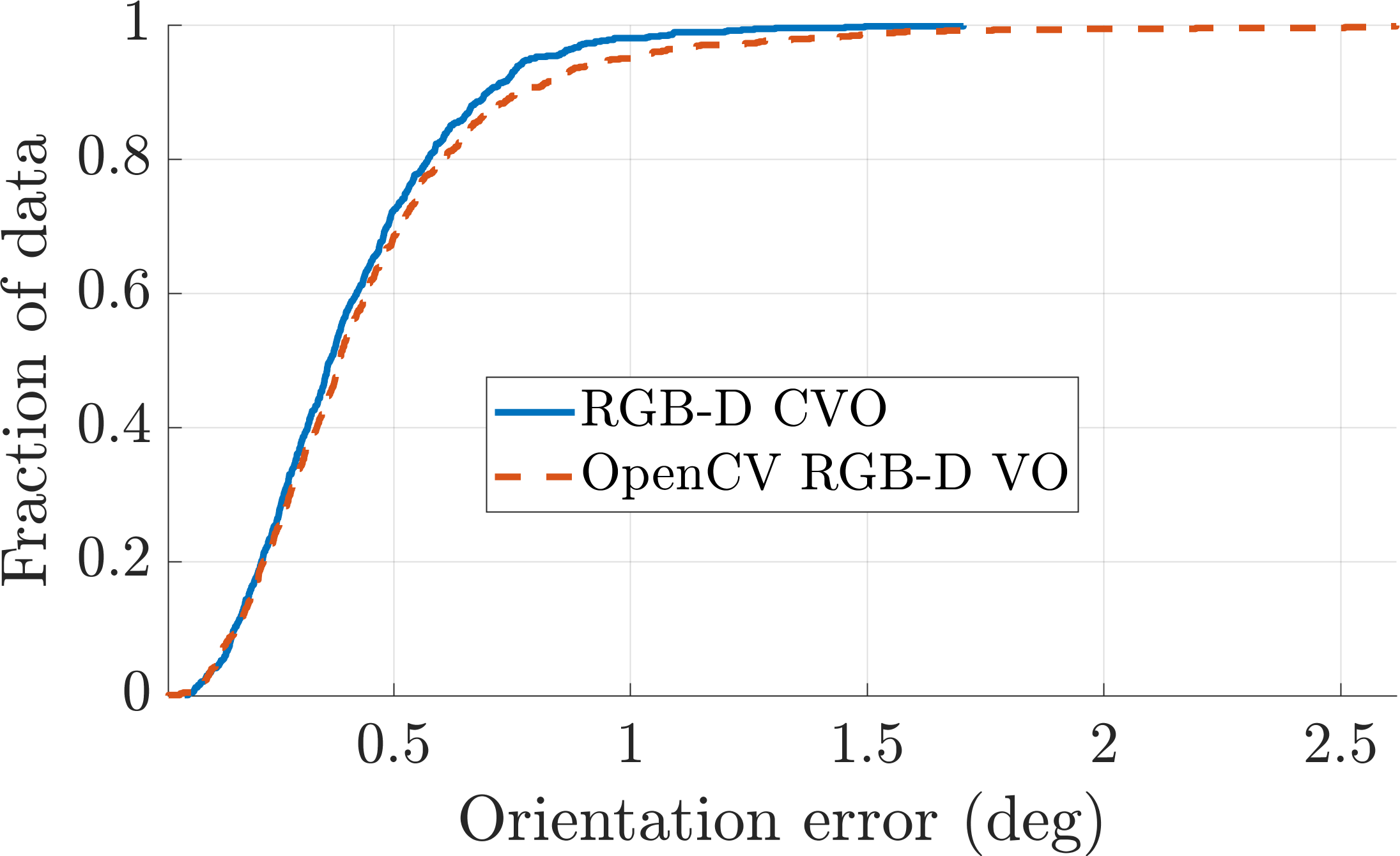}}
    \subfloat{\includegraphics[width=0.5\columnwidth]{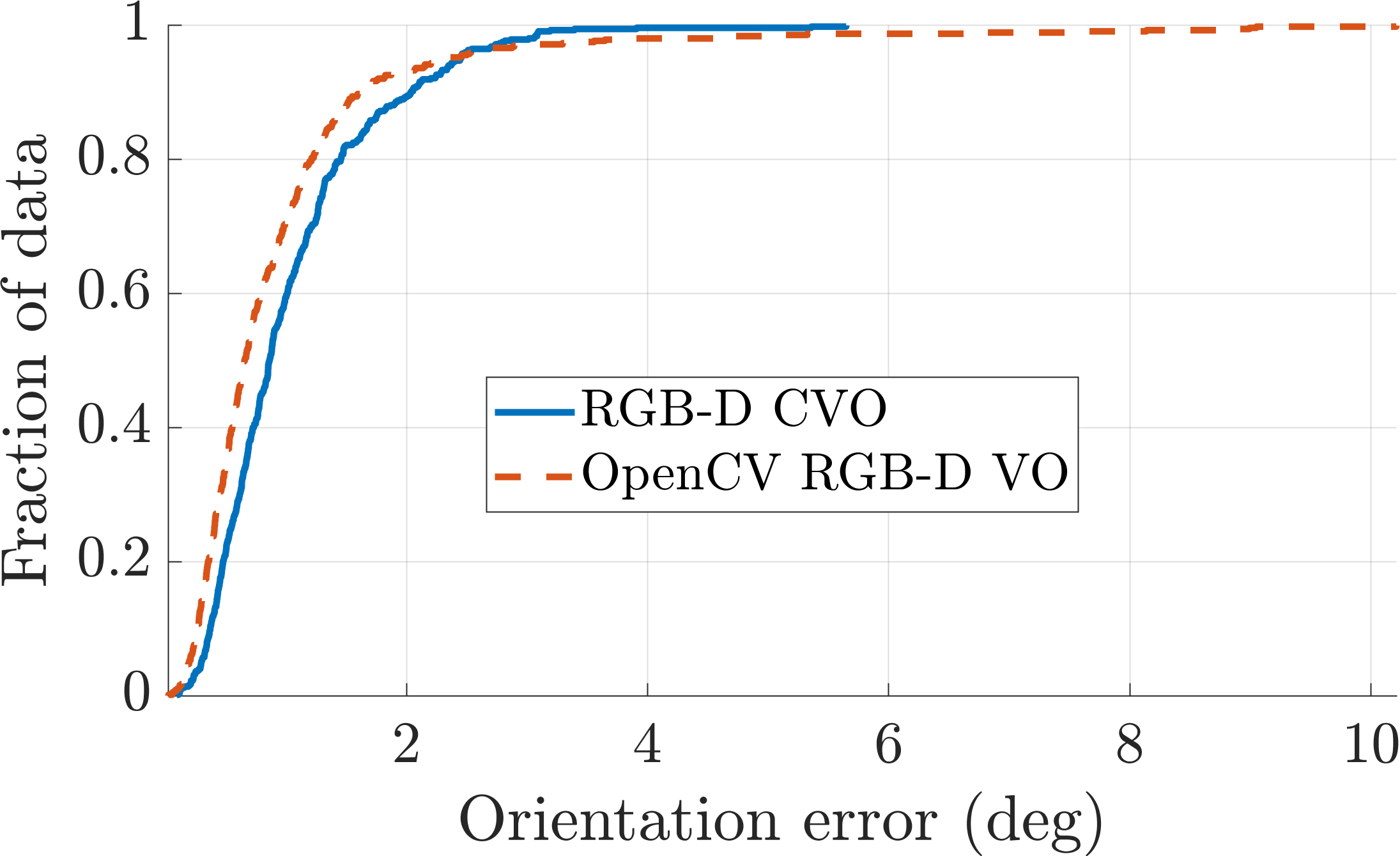}}
    \subfloat{\includegraphics[width=0.5\columnwidth]{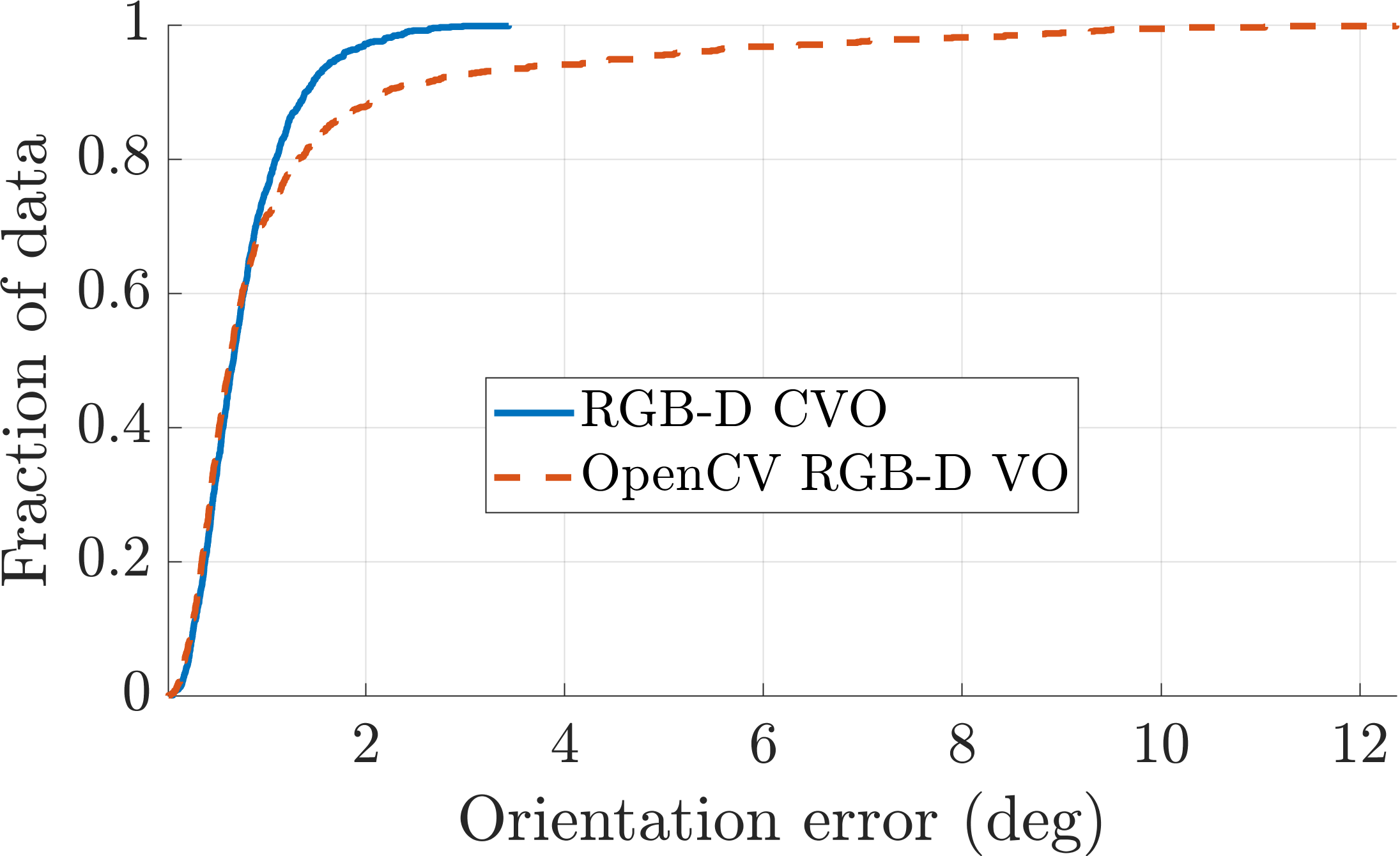}}\\
    \subfloat{\includegraphics[width=0.5\columnwidth]{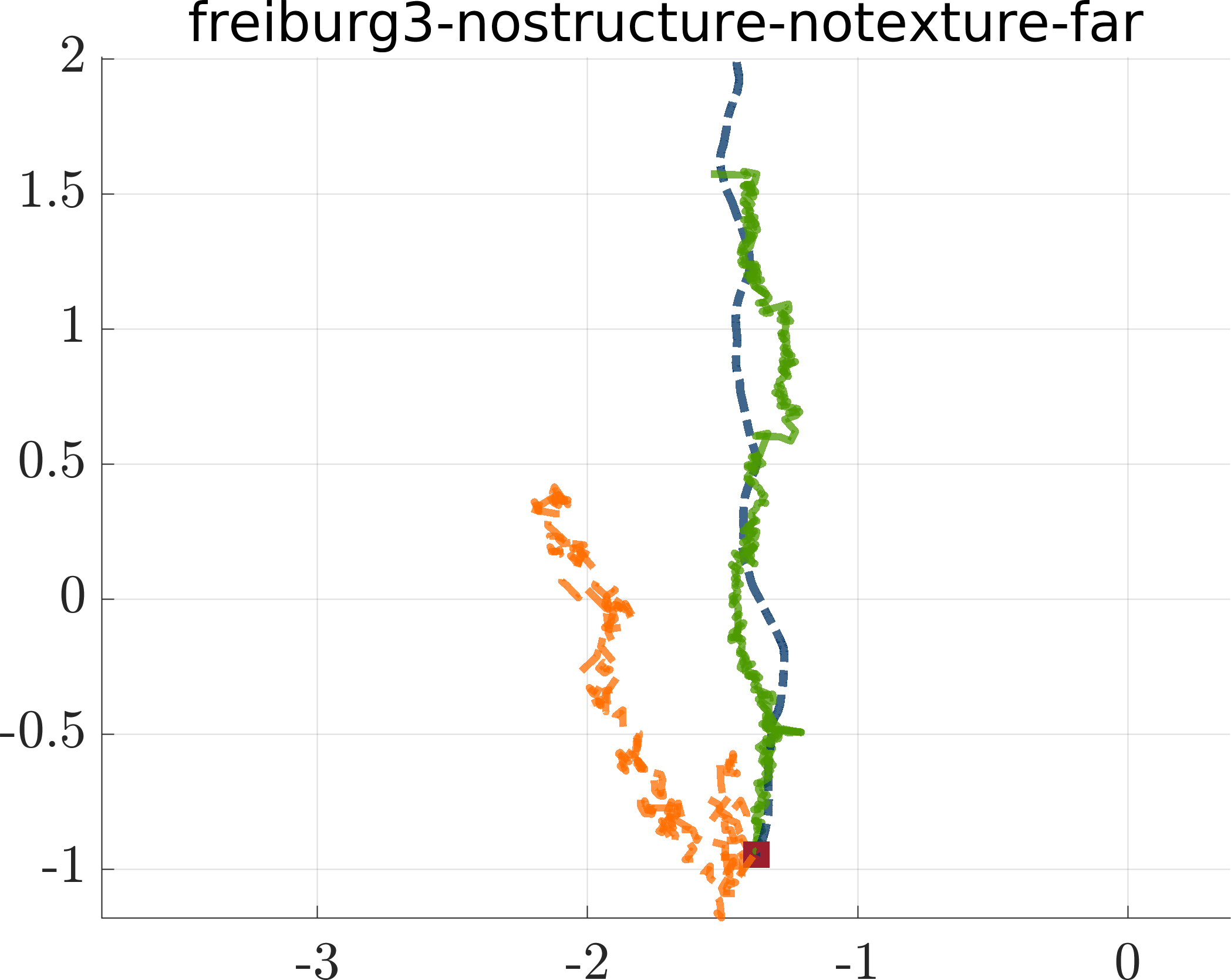}}
    \subfloat{\includegraphics[width=0.5\columnwidth]{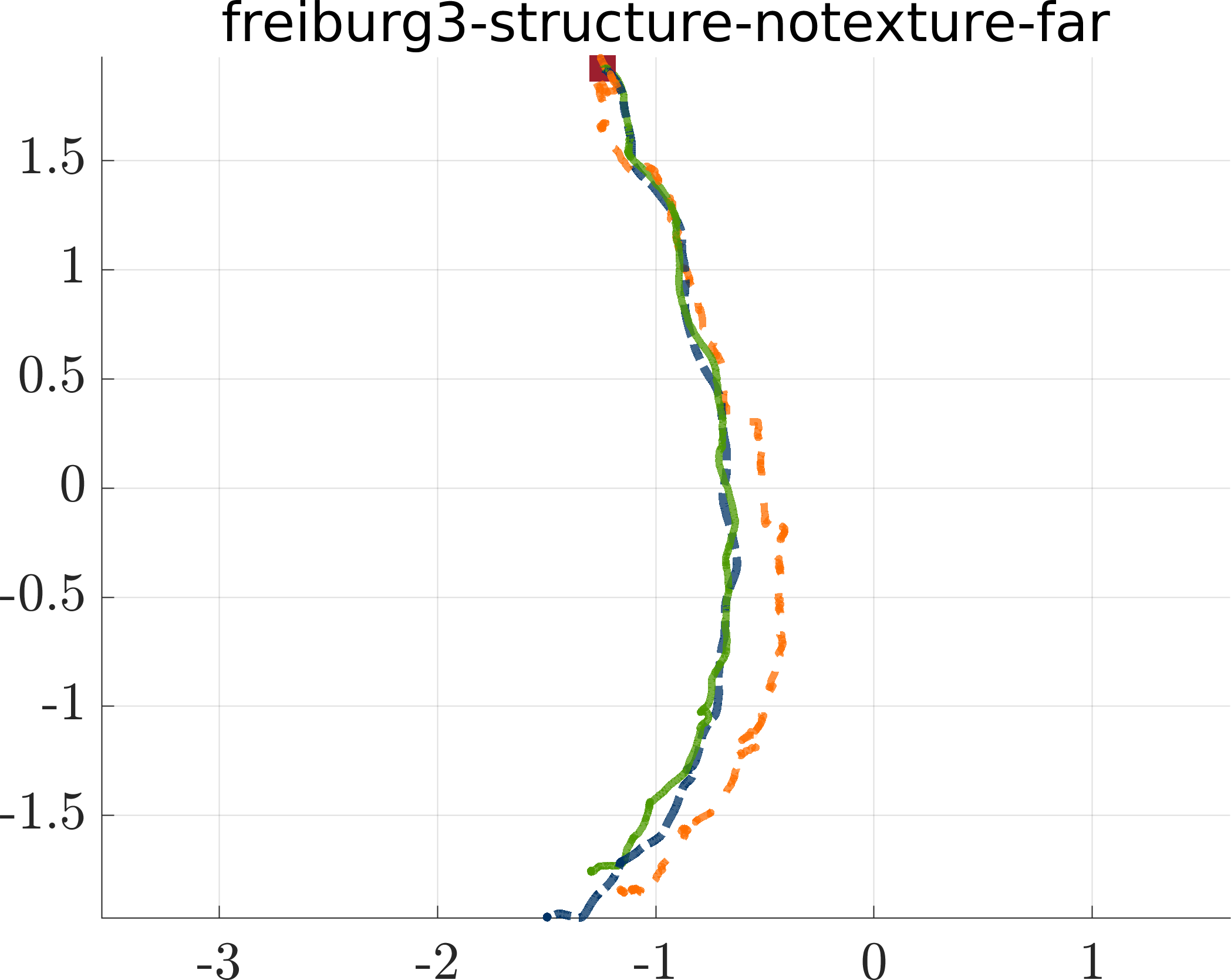}}
    \subfloat{\includegraphics[width=0.5\columnwidth]{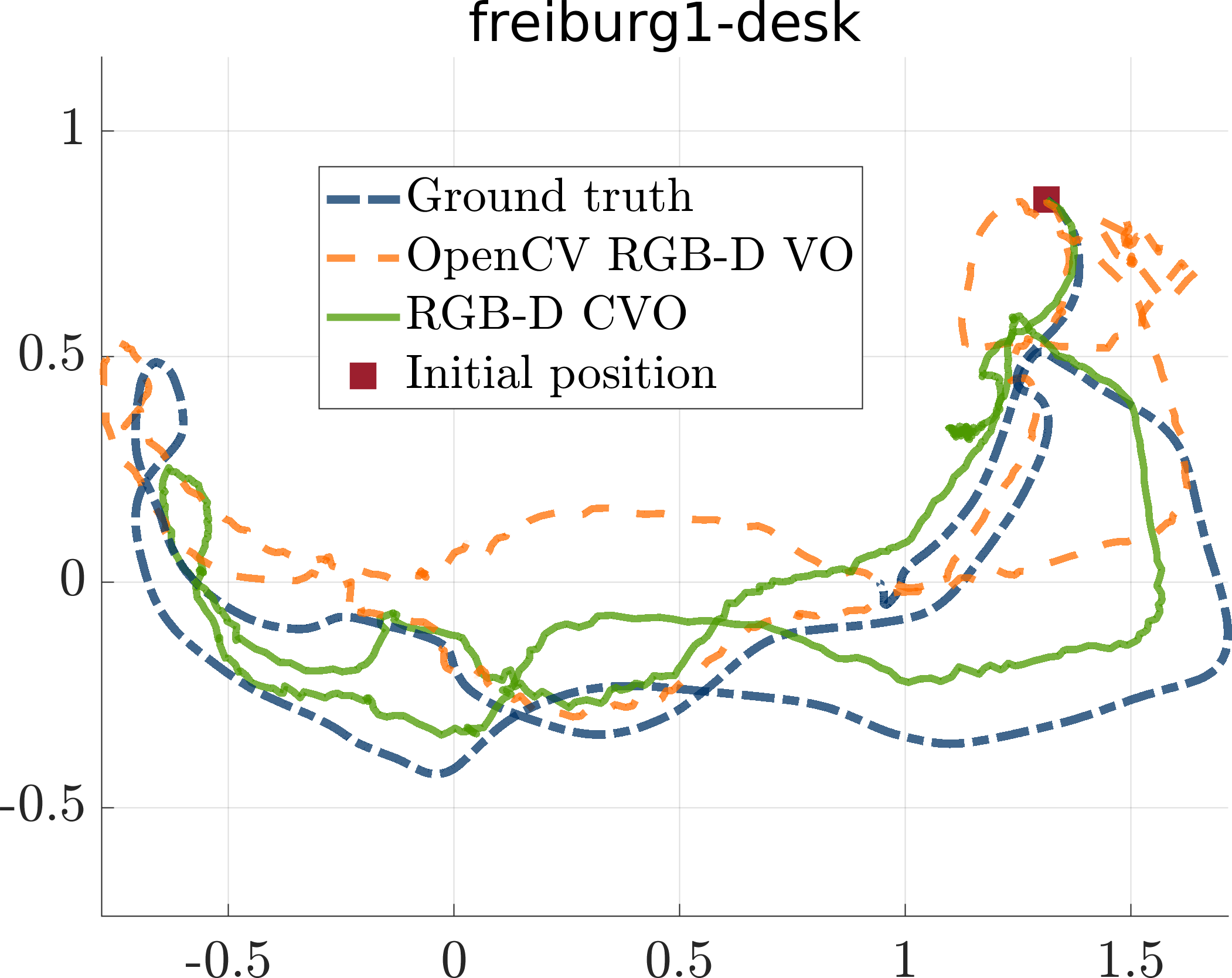}}
    \subfloat{\includegraphics[width=0.5\columnwidth]{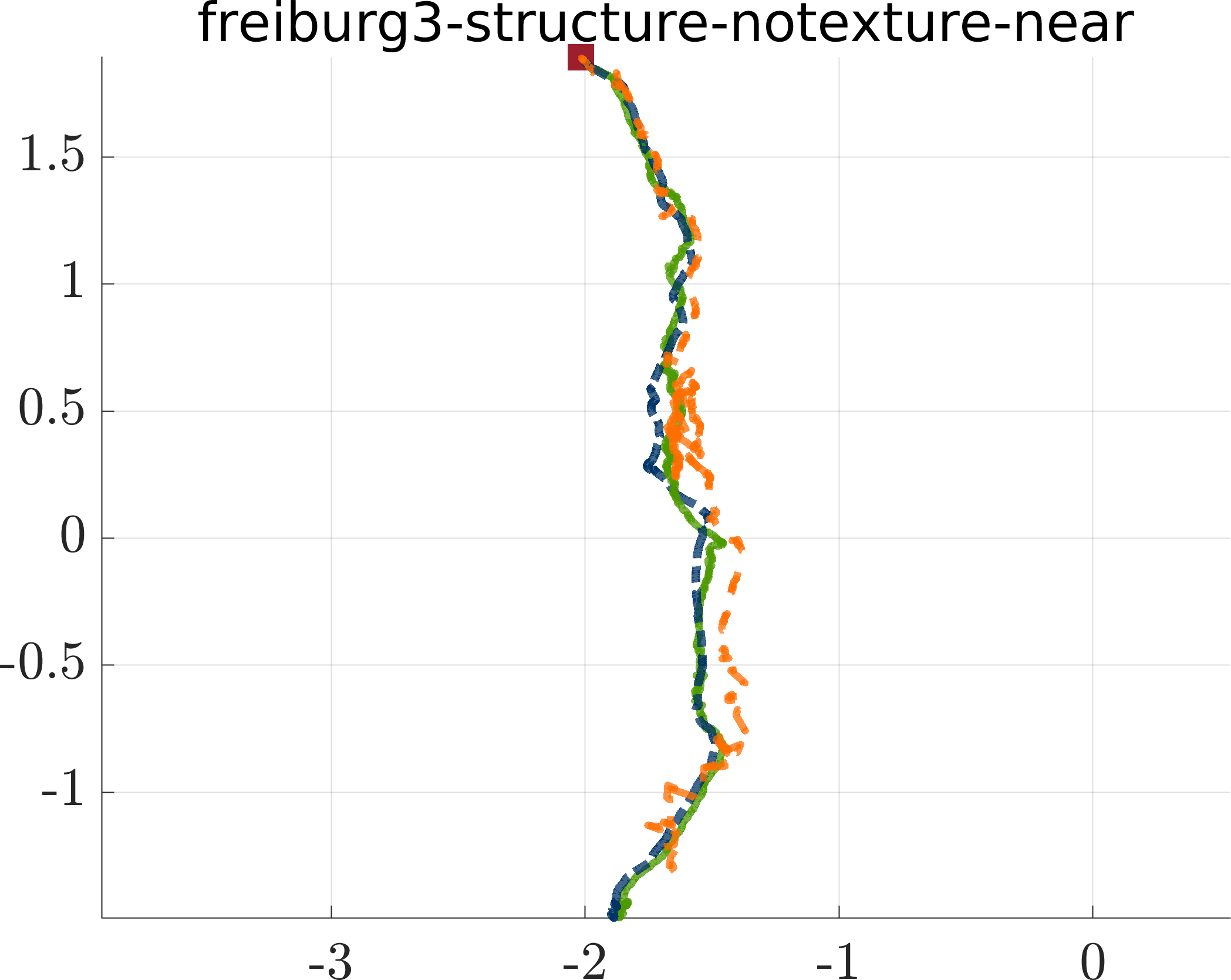}}
    \caption{Experimental evaluation of the proposed Continuous Visual Odometry (CVO) framework using the \mbox{RGB-D} SLAM dataset and benchmark~\citep{sturm12iros}. The baseline for comparison is the open source implementation of the~\citet{steinbrucker2011real} which is available through OpenCV~\citep{opencvrgbd}. The cumulative distribution functions of position and orientation errors shows performance of each method where the relative errors between two registered frames are computed using metrics defined earlier in~\ref{subsec:liesen}. The evaluation is done over the entire trajectories. For CVO, to reduce the number of points and improve the computational efficiency, a similar approach to~\citet{engel2018direct} is adopted by only maintaining the high gradient points. In this work, we extracted edges on 2D images and exploited the one-to-one correspondences in the Microsoft Kinect organized point cloud. The proposed method almost always performs better than or similar to the baseline; however, it is particularly interesting that the difference is significant when the lack of both structure and texture in the environment is evident (first column from left).}
    \label{fig:tum_rgbd}
    \squeezeup%\squeezeup
\end{figure*}

\subsection{TUM RGBD Benchmark}

\begin{table}[t]
\begin{center}
\caption{ Parameters used for evaluation using TUM RGBD Benchmark, similar values were chosen for all experiments. The kernel characteristic length-scale is chosen to be adaptive as the algorithm converges; intuitively, we prefer a large neighborhood of correlation for each point, but as the algorithm reaches the convergence reducing the local correlation neighborhood allows for faster convergence and better refinement.}
% \resizebox{\columnwidth}{!}{
\footnotesize
\begin{tabular}{l r}
\toprule
    Parameters & Value \\
        \midrule
        transformation convergence threshold $\epsilon$ & $1\mathrm{e}{-5}$ \\
        gradient norm convergence threshold $\epsilon$ & $5\mathrm{e}{-5}$ \\
        kernel characteristic length-scale $\ell$ & $0.15$ \\
        kernel characteristic length-scale $\ell$ (iteration $> 3$)  & $0.10$ \\
        kernel characteristic length-scale $\ell$ (iteration $> 10$)  & $0.06$ \\
        kernel characteristic length-scale $\ell$ (iteration $> 20$)  & $0.03$ \\
        kernel signal variance $\sigma$ & $0.1$  \\
        minimum step length & $0.2$ \\
        color space inner product scale & $10\mathrm{e}{-5}$ \\
        kernel sparsification threshold & $1\mathrm{e}{-3}$ \\
\bottomrule
\end{tabular}
\label{tab:parameters}
\end{center}
\squeezeup\squeezeup
\end{table}

We performed experiments on four RGB-D image sequences of \mbox{RGB-D} SLAM dataset and benchmark~\citep{sturm12iros}. This dataset was collected indoors with a Microsoft Kinect using a motion capture system as a proxy for ground truth trajectory. For all tracking experiments, the entire images were used sequentially without any skipping, i.e., at full framerate. Also, the same set of parameters was used for all experiments which are listed in Table~\ref{tab:parameters}. To improve the computational efficiency, we adopted a similar approach to~\citet{engel2014lsd} by maintaining a semi-dense point cloud for each scan. In this work, we simply extracted edges on 2D images and exploited the one-to-one correspondences in the organized point cloud. 

Figure~\ref{fig:tum_rgbd} shows the cumulative distribution functions of position and orientation relative per frame errors. The figure also shows the top view of the accumulate trajectories in which the absolute accumulated error can be seen. The initial transformation always was set to the identity and the kernel evaluations were sparsified (an example is shown in Fig.~\ref{fig:first}) by setting any value less than $1\mathrm{e}{-3}$ to zero. The baseline for comparison is the open source implementation of the direct RGB-D visual odometry method in~\citet{steinbrucker2011real} which is available through OpenCV~\citep{opencvrgbd}. The proposed method performs better than or similar to the baseline, suggesting its potential for improving the visual front-end systems used for odometry or SLAM solutions. 
% It is particularly interesting that the difference is significant when the lack of both structure and texture in the environment is evident.

\begin{remark}
	Due to the fact that the images are in color, each point in the cloud has a color associated with it. This is encoded as a point in $\mathbb{R}^3$. Therefore, we will take $\Ical=\mathbb{R}^3$ and its inner product to be
	\begin{equation}
	\langle \cdot,\cdot\rangle_\Ical = \frac{1}{10^5}\langle \cdot,\cdot\rangle_3.
	\end{equation}
\end{remark}

% \subsection{Notes on Sparsity and Limitations}
% Explain sparsity and sparsification here and the difference/relation with other sparse methods~Fig. \ref{fig:first}. Discuss the limitations such as having a well-calibrated camera as the method works directly on 3D colored point cloud.

%%%%%%%%%%%%%%%%%%%%%%%%%%%%%%%%%%%%%%%%%%%%%%%%%%%%%%%%%

\section{Conclusion and Future Work} 
\label{sec:conclusion}
We developed a continuous and direct formulation and solution for the RGB-D visual odometry problem. The proposed solution, given a calibrated camera, directly works on 3D data and possesses a sparse (or sparsified) structure as well as naturally parallelizable structures. Furthermore, due to the continuous representation, we neither require the association between two measurements sets nor the same number of measurements within each set. Theoretically speaking, given the promising results presented in this paper, the proposed method is an alternative to the core module of many modern visual odometry and tracking systems~\citep{newcombe2011dtam,forster2014svo,wang2017stereo,engel2018direct}. 

Building a visual tracking system such as the work of~\citet{engel2018direct} is an interesting future direction. Another interesting idea is guiding the gradient flow of $\SE(3)$ using an Inertial Measurement Unit (IMU)~\citep{forster2017manifold,eckenhoff2018closed,hartley2019contact} or combined contact-IMU process models~\citep{rhartley-2018b,hartley2019contact}. The propagation of the flow can be done between any camera frames as the visual data has a typically lower frequency.

% While building a visual tracking system such as the work of~\citet{engel2018direct} is orthogonal to the contributions of this work, we acknowledge that this is an interesting future direction. Another interesting idea that we have not explored in this paper is guiding the gradient flow of $\SE(3)$ using an Inertial Measurement Unit (IMU)~\citep{forster2017manifold,eckenhoff2018closed,hartley2019contact} or combined contact-IMU process models~\citep{rhartley-2018b}. The propagation of the flow can be done between any camera frames as the visual data has a typically lower frequency.

A similar approach to the problem presented in this paper was studied in \citep{Bloch1992} and \citep{BROCKETT199179}. The authors show that it is possible to view the differential equation, $\dot{H} = [H,[H,N]]$, where $H$ and $N$ are elements of some Lie algebra, as a gradient flow on an adjoint orbit of the corresponding group \textit{so long as the group is compact}. This can then be used to solve linear programming problems where the data is encoded in $H(0)$ and $N$. Information about the solution to the linear programming problem can then be learned from the double bracket structure.
The function which generates the gradient is
\begin{equation}\label{eq:killing}
    F(\theta) = \kappa\left(Q,\mathrm{Ad}_{\theta}N\right),\quad \theta\in\Gcal,\quad Q,N\in\mathfrak{g},
\end{equation}
where $\kappa$ is the Killing form and $\mathrm{Ad}$ is the adjoint action. When the Lie group, $\Gcal$, is compact, the Killing form is negative definite, i.e. $-\kappa$ is a nondegenerate, $\mathrm{Ad}$-invaraint inner product on $\mathfrak{g}$. Equation \eqref{eq:killing} is closely related to Equation \eqref{eq:max}; however the group studied here is $\SE(n)$ which is noncompact (and furthermore, there do not exist any $\mathrm{Ad}$-invariant inner products on $\se(n)$). In future work we shall discuss how this framework can be used to solve Problem \ref{prob:problem} when the group is restricted to be $\SO(n)$.

\section*{Acknowledgments}
\small{Funding for M. Ghaffari is given by the Toyota Research Institute (TRI), partly under award number N021515, however this article solely reflects the opinions and conclusions of its authors and not TRI or any other Toyota entity. Funding for J. Grizzle was in part provided by TRI and in part by NSF Award No.~1808051. Funding for W. Clark and A. Bloch was provided by NSF DMS 1613819 and AFSOR FA9550-18-10028.}

\small 
\bibliographystyle{plainnat}
\bibliography{strings-full,ieee-full,references}

\end{document}